\renewcommand*{\backref}[1]{}
\renewcommand*{\backrefalt}[4]{%
    \ifcase #1%
          \or [Cited on page~#2.]%
          \else [Cited on pages~#2.]%
    \fi%
    }
\crefname{figure}{Fig.}{Figs.}
\crefname{definition}{Defn.}{Defns.}
\crefname{corollary}{Cor.}{Cors.}
\crefname{proposition}{Prop.}{Props.}
\crefname{theorem}{Thm.}{Thms.}
\crefname{remark}{Remark}{Remarks}
\crefname{principle}{Principle}{Principles}
\crefname{lemma}{Lemma}{Lemmata}
\crefname{claim}{Claim}{Claims}
\crefname{table}{Tab.}{Tabs.}
\crefname{section}{\S}{\S\S}
\crefname{subsection}{\S}{\S\S}
\crefname{subsubsection}{\S}{\S\S}
\crefname{assumption}{Assumption}{Assumptions}
\crefname{appendix}{Appx.}{Appx.}
\crefname{equation}{Eq.}{Eqs.}
\crefname{example}{Example}{Examples}
\titlespacing{\paragraph}{%
  0pt}{%
  0pt}{%
  1em}%
\icmltitlerunning{Provably Learning Object-Centric Representations}
\begin{document}

\twocolumn[
\icmltitle{Provably Learning Object-Centric Representations}

\icmlsetsymbol{equal}{*}
\icmlsetsymbol{last}{$\dagger$}

\begin{icmlauthorlist}
\icmlauthor{Jack Brady}{equal,mpi}
\icmlauthor{Roland S. Zimmermann}{equal,mpi}
\icmlauthor{Yash Sharma}{mpi}
\icmlauthor{Bernhard Sch\"olkopf}{mpi}\\
\icmlauthor{Julius von K\"ugelgen}{last,mpi,cam}
\icmlauthor{Wieland Brendel}{last,mpi}
\end{icmlauthorlist}

\icmlaffiliation{mpi}{Max Planck Institute for Intelligent Systems, T\"ubingen, Germany}
\icmlaffiliation{cam}{Department of Engineering, University of Cambridge, Cambridge, United Kingdom}

\icmlcorrespondingauthor{Jack Brady, Wieland Brendel}{first.last@tue.mpg.de}

\icmlkeywords{Identifiability theory, object-centric learning, latent variable model, unsupervised learning}

\vskip 0.3in
]

\printAffiliationsAndNotice{\textsuperscript{*}Equal contribution  \textsuperscript{$\dagger$}Shared last author} %

\begin{abstract}%
Learning structured representations of the visual world in terms of objects promises to significantly improve the generalization abilities of current machine learning models. While recent efforts to this end 
have shown promising empirical progress, a theoretical account of when unsupervised object-centric representation learning is possible is still lacking. Consequently, understanding the reasons for the success of existing object-centric methods as well as designing new theoretically grounded methods remains challenging. In the present work, we analyze when object-centric representations can provably be learned without supervision. To this end, we first introduce two assumptions on the generative process for scenes comprised of several objects, which we call \emph{compositionality} and \emph{irreducibility}. Under this generative process, we prove that the ground-truth object representations can be 
identified by an invertible and compositional inference model, even in the presence of dependencies between objects. We empirically validate our results
through experiments on synthetic data. Finally, we provide evidence that our theory holds predictive power for existing object-centric models by showing a close correspondence between models’ compositionality and invertibility and their empirical identifiability.\footnote{Code/Website: \href{https://brendel-group.github.io/objects-identifiability}{brendel-group.github.io/objects-identifiability}}
\end{abstract}

\begin{figure*}[t]
    \centering
    \includegraphics[width=.9\textwidth]{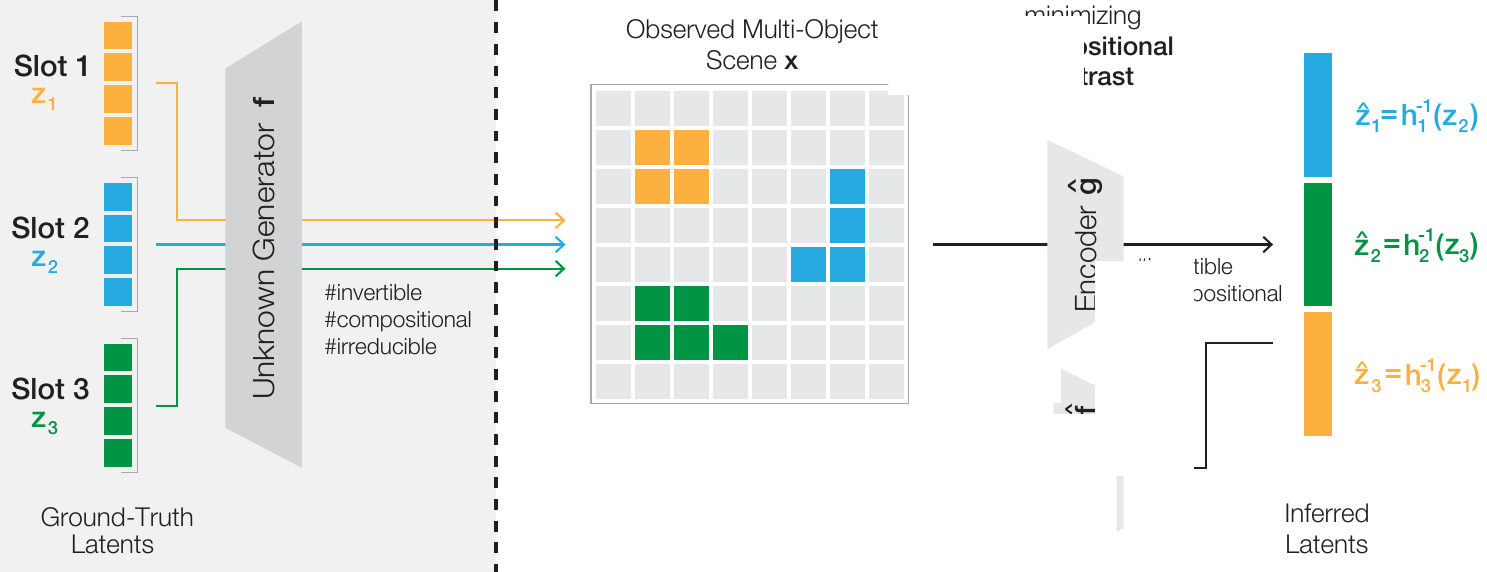}
    \caption{
    \textbf{When can unsupervised object-centric representations  provably be learned?} We assume that observed scenes~$\xb$ 
     comprising $K$ objects
    are rendered by an unknown generator~$\fb$ 
    from multiple ground-truth latent slots $\zb_1,..., \zb_K$ (here, $K=3$). We assume that this generative model has two key properties, which we call \textit{compositionality} (\cref{def:compositional}) and \textit{irreducibility} (\cref{def:irreducible_mechanism}). Under this model, we prove~(\cref{thm:slot_identifiability}): An invertible inference model with a compositional inverse
    yields 
    latent slots $\zbh_i$ which identify the ground-truth slots up to permutation and slot-wise invertible functions $\hb_i$ (\textit{slot identifiability}, \cref{def:slot_identifiability}). 
    To measure violations of compositionality in practice, we introduce a contrast function~(\cref{def:compositional_contrast}) which is zero if and only if a function is compositional, while to measure invertibility, we rely on the reconstruction loss in an auto-encoder framework.
    }
    \label{fig:header}
\end{figure*}

\section{Introduction}
Human intelligence exhibits an unparalleled ability to 
generalize
from a limited amount of experience 
to a wide range of novel situations~\citep{Tenenbaum2011HowTG}. 
To build machines with similar capabilities, a fundamental question is what types of abstract representations of sensory inputs 
enable such generalization~\citep{goyal2020inductive}. Research in cognitive psychology suggests that one key abstraction is the ability to represent visual scenes in terms of 
individual objects~\citep{Spelke2003WhatMU, spelke2007core, dehaene2020we,peters2021capturing}. Such \textit{object-centric representations} are thought to facilitate core cognitive abilities such as compositional generalization~\citep{fodor1988connectionism,lake2017building,Battaglia2018RelationalIB,greff2020binding} and causal reasoning over discrete concepts~\citep{Marcus2001-MARTAM-10,gopnik2004theory,gerstenberg2017intuitive,gerstenberg2021counterfactual}.

Significant effort has thus gone into endowing machine learning models with the capacity to learn object-centric representations from raw visual input. While initial approaches 
were mostly supervised ~\citep{Ronneberger2015UNetCN,he2017mask,chen2017deeplab}, a recent wave of new methods explore learning object-centric representations without direct supervision~\citep{greff2019multi,burgess2019monet,lin2020space,kipf2019contrastive,locatello2020object,weis2021benchmarking,biza2023invariant}. These methods have begun exhibiting impressive results, showing potential to scale to complex visual scenes~\citep{caron2021emerging,singh2021illiterate,Sajjadi2022ObjectSR,Seitzer2022BridgingTG} and real-world video datasets~\citep{kipf2021conditional,Singh2022SimpleUO,Elsayed2022SAViTE}.
 
Yet, despite this empirical progress, we still lack a \textit{theoretical} understanding of when unsupervised object-centric representation learning is possible. This makes it challenging to isolate the reasons underlying the success and failure of existing object-centric models and to develop principled ways to improve them. Furthermore, it is currently not possible to design novel object-centric methods that are theoretically grounded and not solely based on heuristics, many of which break down in more realistic settings~\citep{karazija2021clevrtex,Papa2022InductiveBF,Yang2022PromisingOE}. 

In the present work, we aim to address this deficiency by investigating when object-centric representations can \textit{provably} be learned without any supervision. To this end, we first specify a data-generating process for multi-object scenes as a structured latent variable model in which each object is described by a subset of latents, or a latent \textit{slot}. We then study the \emph{identifiability}
of object-centric representations under this model, i.e., we investigate under which conditions an inference model will be guaranteed to recover the subset of ground-truth latents for each object. 

\looseness-1 Because identifying the ground-truth latent variables is impossible without further assumptions on the generative process~\citep{HYVARINEN1999429,locatello2019challenging}, previous identifiability results primarily rely on distributional assumptions on the 
latents~\citep{hyvarinen2016unsupervised,Hyvrinen2017NonlinearIO,hyvarinen2019nonlinear,khemakhem2020variational,khemakhem2020ice,klindt2020towards,zimmermann2021contrastive}. In contrast, we make no such assumptions,
thus allowing for 
arbitrary statistical and causal dependencies between objects.

\paragraph{Structure and Main Contributions.}
 In the present work, we instead take the position
 that the object-centric nature of the problem imposes a very specific \textit{structure} on the \textit{generator function} that renders scenes from latent slots~(\cref{sec:model}).
Specifically, we define two key properties that this function should satisfy: \textit{compositionality}~(\cref{def:compositional}) and  \textit{irreducibility}~(\cref{def:irreducible_mechanism}). Informally, these properties imply that every pixel can only correspond to one object and that information is shared across different parts of the same object 
but not between parts of different objects---inspired by the principle of independent causal mechanisms~\citep{peters2017elements}.
Under this generative model, we then prove in~\cref{sec:theory} our \textit{main theoretical result}: the ground-truth latent slots can be identified without supervision by an invertible inference model with a compositional inverse~(\cref{thm:slot_identifiability}). To quantify compositionality, we introduce a \textit{contrast function}~(\cref{def:compositional_contrast}) that is zero if and only if a function is compositional; to quantify invertibility, we rely on reconstruction error.
We validate on synthetic data that inference models which
maximize invertibility and compositionality
indeed identify the ground-truth latent slots, even with dependencies between latents~(\cref{subsec:exp1}).
 Finally, we examine existing object-centric learning models on image data and find a close correspondence between models’ compositionality and invertibility and their success
in identifying the ground-truth latent slots~(\cref{subsec:exp2}).

To the best of our knowledge, the present work provides the first identifiability result for object-centric representations. We hope that this lays the groundwork for a better understanding of success and failure in unsupervised object-centric learning, and that future work can build on these insights to develop more effective learning methods.

\paragraph{Notation.} Bold lowercase $\zb$ denotes  vectors, bold uppercase $\Jb$ denotes matrices.
For $n\in\mathbb{N}$, let $[n]$ denote the set 
$\{\,1,\dots, n\,\}$. Additionally, if $\fb$ is a function with $n$ component functions, let $\fb_S$ denote the restriction of $\fb$ to the component functions indexed by $S\subseteq[n]$, i.e., $\fb_{S} :=(f_s)_{s\in S}$.

\begin{figure*}[t]
\centering
  \includegraphics[width=.9\textwidth]{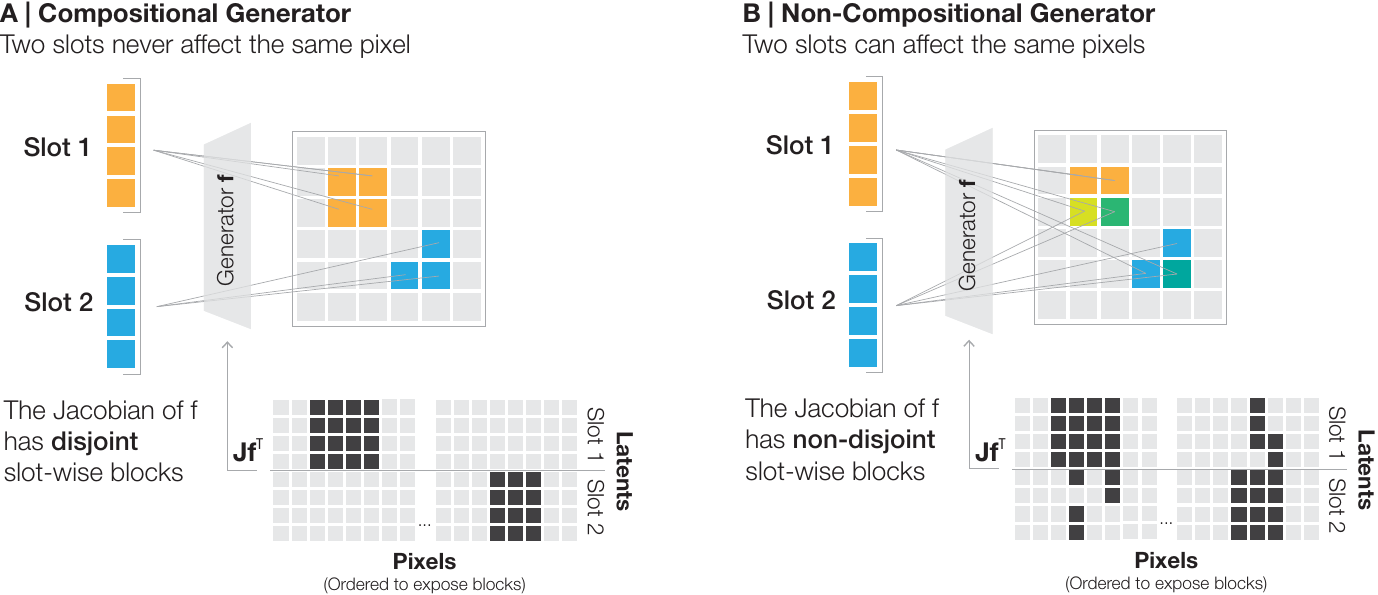}
  \caption{\textbf{Difference between a compositional and a non-compositional generator.}
  \textbf{(A)} For a compositional generator $\fb$, every pixel is affected by at most one latent slot. As a result, there always exists an ordering of the pixels such that the generator's Jacobian $\Jb\fb$ consists of disjoint blocks, one for each latent slot \textit{(bottom)}. %
  Note that both the pixel ordering and the specific structure of the Jacobian are not fixed across scenes and might depend on the latent input~$\zb$. 
  \textbf{(B)} For a non-compositional generator, there exists no pixel ordering that exposes such a structure in the Jacobian, since the same pixel can be affected by more than one latent slot.}
  \label{fig:compositional_function_visualisation}
\end{figure*}

\section{Generative Model}
\label{sec:model}
While humans have a clear intuition for what constitutes an object, formalizing this notion mathematically is not straightforward.
Indeed, there is no universally agreed-upon definition of an object; %
various formalizations based upon distinct criteria co-exist~\citep{green2019perceptualobj,Spelke1990PrinciplesOO,Koffka1936PrinciplesOG,greff2020binding}. 
We approach the problem by defining multi-object scenes in terms of a latent variable model (see~\cref{fig:header} for an overview) and argue that the object-centric nature of the problem necessitates a very specific structure on the generator, which we leverage in~\cref{sec:theory} to prove our identifiability result.

As a starting point, we assume that observed data samples~$\xx$
of multi-object scenes 
are generated from a set of latent random vectors $\zz$
through a 
diffeomorphism\footnote{a differentiable bijection with differentiable inverse}  
$\fb: \Z\rightarrow \X$, mapping from a \emph{latent} space $\Z$ to an \emph{observation} space $\X$,
\begin{equation}\label{eqn:10.5}
    \zz \sim p_\zz,\qquad\qquad \xx=\fb(\zz).
\end{equation}
The only assumption we place on $p_\zb$ is that it is fully supported on $\Z$. In particular, we do not require independence and allow for arbitrary dependencies between components of $\zb$, motivated by the fact that the presence or properties of certain objects may be correlated with those of other objects.

\subsection{Slots and Compositionality}
\label{subsec:slots_compositionality}
We think of an object in a scene as being encoded not by a single latent component $z_i$ but instead by a group of latents $\zz_k$ which specify its properties. For a scene comprised of $K$ objects, we thus assume that the latent space $\mathcal{Z}$ factorizes into $K$ subspaces~$\mathcal{Z}_{k}$, which we refer to as \emph{slots}. Each slot is assumed to have dimension $M$, representing, e.g., $M$~distinct object properties. More precisely, $\forall k \in [K]: \mathcal{Z}_{k}=\mathbb{R}^{M}$, and $\Z=\Z_{1}\times \dots \times\Z_{K}=\mathbb{R}^{KM}$. 

Let $\zz_k$ be the latent vector of the $k$\textsuperscript{th} slot. The full $KM$-dimensional latent \emph{scene representation} vector $\zz$  is then given by the concatenation of the latents from all slots, 
\begin{equation}
    \zz = \left(\zz_1, \dots, \zz_K\right).
\end{equation}
We would like to ensure that each latent slot $\zz_k$ is responsible for encoding a distinct object in a scene. To this end, the latent scene representation $\zz$ should be rendered by $\fb$ such that each slot generates \textit{exactly} one object~(see~\cref{fig:header}).
If $\fb$ is an arbitrary function with no additional constraints, however, this will generally not be the case.

First, $\fb$ lacks any structure which ensures that an object is not generated by more than one latent slot.
To see this, let $I_k(\zz)\subseteq [N]$ denote the subset of pixels in an image generated from scene representation $\zb$
that functionally depend on slot~$k$,
\begin{equation}\label{eqn:compositional_index_sets}
    I_k(\zb) := \left\{\,n\in [N] : \frac{\partial f_n}{\partial \zb_{k}}(\zb) \neq {\bf 0} \,\right\}.
\end{equation}
Note the dependence on $\zb$, which encodes that an object may appear in different places across different scenes.

\begin{figure*}[t]
\centering
  \includegraphics[width=.9\textwidth]{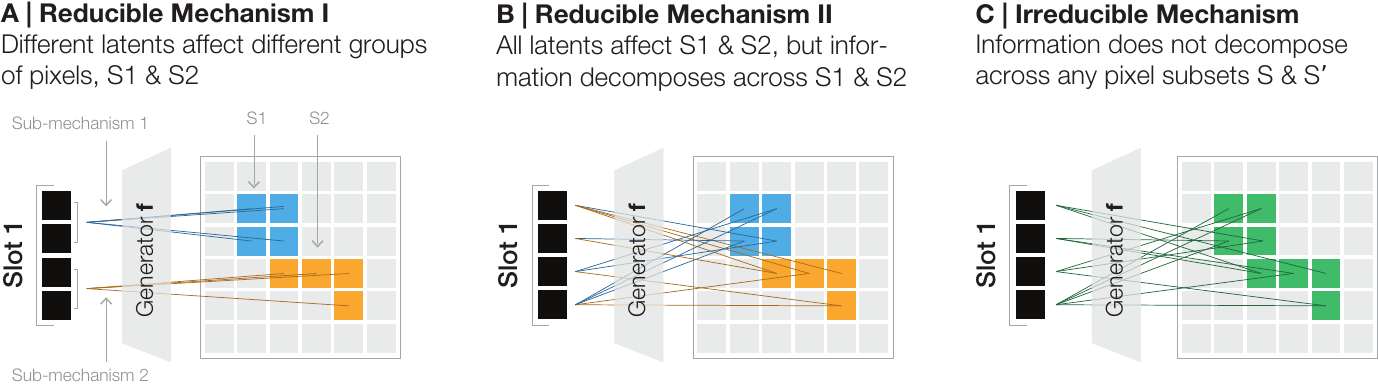}
  \vspace{-.5em}
 \caption{\textbf{(Ir)reducible mechanisms.} \textbf{(A)} A simple example of a \textit{reducible mechanism} is one for which disjoint subsets of latents from the same slot render pixel groups $S_1$ and $S_2$ separately  such that they form \textit{independent sub-mechanisms} according to~\cref{def:independent_dependent_submechanism}. This independence between sub-mechanisms is indicated by the difference in colors.  \textbf{(B)} Not all reducible mechanisms look as simple as panel A: here, $S_1$ and $S_2$ depend on every latent component in the slot, but the information in $S_1\cup S_2$ still decomposes across $S_1$ and $S_2$ as sub-mechanisms 1 and 2 are independent.
 \textbf{(C)} In contrast, for an irreducible mechanism, the information does not decompose across any pixel partition $S,  
 S^{'}$, and so it is impossible to separate it into independent sub-mechanisms.}
\label{fig:irreducibility}
\end{figure*}

Without further constraints on $\fb$, the pixel subsets $I_k(\zz)$ and $I_{j}(\zz)$ can overlap for any $k\neq j$ such that latent slots $k,j$ can affect the same pixels and thus contribute to generating the same object (see~\cref{fig:compositional_function_visualisation}B, top). To avoid this, we impose the following structure  on $\fb$, which we call {\it compositionality}.
\begin{definition}[Compositionality]\label{def:compositional}
    Let $\fb: \Z\rightarrow \X$ be differentiable. $\fb$ is said to be \emph{compositional}~if
    \begin{equation} \label{eqn:compositional}
        \forall \zb\in \Z: \qquad   k\neq j \implies   I_k(\zb) \cap I_j(\zb) = \emptyset.
    \end{equation}
\end{definition}Compositionality implies that each pixel is a function of at most one latent slot and thus imposes a local sparsity structure on the \textit{Jacobian} matrix $\Jb\fb=\big(\frac{\partial  f_i}{\partial 
 z_j}\big)_{ij}$ of $\fb$, which is visualized in \cref{fig:compositional_function_visualisation}, bottom.
Intuitively, the Jacobian of a compositional generator can always be brought into block structure through an appropriate permutation of the pixels. However, this block structure is local in that the required permutation may differ across scene representations $\zb$.

\subsection{Mechanisms and Irreducibility}
While compositionality ensures that different latent slots do not generate the same object, we need an additional constraint on $\fb$ to ensure that each slot generates only one object, rather than something humans would regard as multiple objects. 
To see this, consider the example depicted in~\cref{fig:irreducibility}A, where $\fb$ maps the first half of the latent slot to the pixels denoted~$S_{1}$ and the second half to~$S_{2}$. It is clear that for humans, these groups of pixels would likely be considered  as distinct objects. On the other hand, it is not immediately clear what formal criteria would give rise to such a distinction. 

Intuitively, the issue with the two ``sub-objects'' $S_1$ and $S_2$ in~\cref{fig:irreducibility}A appears to be that they are \textit{independent} of each other in some sense. To avoid such splitting of objects within slots, we would thus like to enforce that pixels belonging to the same object are \textit{dependent} on one another. But what is a meaningful notion of such \textit{instance-level} independence of objects? Since we are dealing with a single scene sampled according to~\cref{eqn:10.5}, it cannot be statistical in nature. Instead, our intuition is more aligned with the notion of \textit{algorithmic independence} of objects~\citep{janzing2010causal}, a formalization\footnote{albeit an impractical one formulated in terms of Kolmogorov complexity (algorithmic information), which is not computable} of the principle of independent causal mechanisms (ICM)
which posits that physical generative processes consist of ``autonomous modules that do not inform or influence each other''~\citep{peters2017elements}.
The two subsets of pixels $S_1$ and $S_2$ in~\cref{fig:irreducibility}A are independent of each other in precisely this sense: they arise from autonomous processes that do not share information.

In the following, we therefore draw inspiration from prior implementations of the ICM principle~\citep[][see~\cref{sec:related_work} for more details]{daniusis2010inferring,janzing2012information,gresele2021independent}
to formalize our intuitions about independence of objects. 
First, we define the mapping which locally renders information from the $k$\textsuperscript{th} latent slot to the affected pixels $I_k(\zz)$ which we refer to as a \emph{mechanism}.%
\begin{definition}[Mechanism]\label{def:mechanism}
    $\forall \zb\in \Z, k\in [K]$, we define the $k$\textsuperscript{th} \emph{mechanism} of $\fb$ at $\zb$ as the Jacobian matrix $\Jb\fb_{I_k}(\zb)$.
\end{definition}
The $k$\textsuperscript{th} mechanism can be understood as the sub-matrix of the Jacobian of $\fb$ whose rows correspond to the pixels $I_k(\zz)$ affected by slot $k$. Further, we define a \emph{sub-mechanism} as the restriction
to a {\it subset} of the affected pixels.
\begin{definition}[Sub-Mechanism]\label{def:submechanism}
    $\Jb\fb_{S}(\zb)$ is said to be a \emph{sub-mechanism} of $\Jb\fb_{I_k}(\zb)$, if $S\subseteq I_k(\zb)$ and $S$ is nonempty.
\end{definition}
\looseness-1 In light of these definitions, \cref{fig:irreducibility}A 
consists of two sub-mechanism, $\Jb\fb_{S_1}(\zb)$ and $\Jb\fb_{S_2}(\zb)$, which generate pixels $S_{1}$ and $S_{2}$. To characterize the level of dependence between sets of pixels and their associated sub-mechanisms,
we propose to use the matrix \textit{rank}, which can be seen as a non-statistical measure of information as it
locally characterizes the latent capacity used to generate the corresponding pixels.
\begin{definition}[Independent/Dependent Sub-Mechanisms]\label{def:independent_dependent_submechanism}
    Let $S_1, S_2 \subseteq [N]$ and $\zb\in\Z$. The sub-mechanisms $\Jb\fb_{S_1}(\zb)$ and $\Jb\fb_{S_2}(\zb)$ are said to be \emph{independent} if:
    \begin{equation} \label{eq:independent_dependent_submechanism_independent}
        \rank\left(\Jb\fb_{S_1\cup S_2}(\zb)\right)=\rank\left(\Jb\fb_{S_1}(\zb)\right)+\rank\left(\Jb\fb_{S_2}(\zb)\right).
    \end{equation}
    Conversely, they are said to be \emph{dependent} if:
    \begin{equation*}\label{eqn:dependent_partition}
        \rank \left( \Jb\fb_{S_1\cup S_2}(\zb)\right) < \rank \left( \Jb\fb_{S_1}(\zb)\right) + \rank \left( \Jb\fb_{S_2}(\zb)\right).
    \end{equation*}
\end{definition}
Intuitively, two sub-mechanisms $\Jb\fb_{S_1}(\zb)$ and $\Jb\fb_{S_2}(\zb)$ are independent according to~\cref{def:independent_dependent_submechanism} if 
the information content of pixels $S_1\cup S_2$ decomposes across $S_1$ and $S_2$ in the sense that the latent capacity required to \textit{jointly} generate $S_1\cup S_2$ (LHS of~\cref{eq:independent_dependent_submechanism_independent}) is the same as that required to generate $S_1$ and~$S_2$ \textit{separately} (RHS of~\cref{eq:independent_dependent_submechanism_independent}). Such a decomposition will occur when the rows of the sub-mechanism $\Jb\fb_{S_1}(\zb)$ do not lie in the row-space of the sub-mechanism $\Jb\fb_{S_2}(\zb)$ and vice-versa. This will be the case in~\cref{fig:irreducibility}A where $\Jb\fb_{S_1}(\zb)$ and $\Jb\fb_{S_2}(\zb)$ affect different pixels since the rows of the Jacobian for pixels $S_1$ and $S_2$ will never have non-zero entries for the same column. As shown in~\cref{fig:irreducibility}B, however, it could also be the case that all latents within a slot affect pixels in both $S_1$ and $S_2$, yet the information content of $S_1\cup S_2$ still decomposes across $S_1$ and $S_2$ since the rows of $\Jb\fb_{S_1}(\zb)$ and $\Jb\fb_{S_2}(\zb)$ could span linearly independent subspaces.

To enforce that each slot generates only one object, we now finally place the condition on the mechanisms of $\fb$ that they cannot be partitioned into independent sub-mechanisms (see~\cref{fig:irreducibility}C). We refer to 
this property as {\it irreducibility}.
\begin{definition}[Irreducibility] \label{def:irreducible_mechanism}
    $\fb$ is said to have \emph{irreducible mechanisms}, or is \emph{irreducible}, if for all
    $\zb\in \Z$, $k\in [K]$ and any partition of $I_k(\zb)$ into $S_1$ and $ S_2$,  the sub-mechanisms $\Jb\fb_{S_1}(\zb)$ and $\Jb\fb_{S_2}(\zb)$ are dependent in the sense of \cref{def:independent_dependent_submechanism}.
\end{definition}

\section{Theory: Slot Identifiability }
\label{sec:theory}
Given multi-object scenes sampled from the generative model outlined in \cref{sec:model}, we now seek to understand under what conditions an
{\it inference model} $\hat{\gb}:\X\to\Z$ will provably identify
the ground-truth object representations.
Ideally, we would like $\hat{\gb}$ to recover the true inverse $\gb:=\fb^{-1}$, but that is generally only possible up to certain irresolvable ambiguities. 
In our multi-object setting, the objective is to separate the object representations such that each inferred slot captures 
{\it one and only one} ground-truth slot.
We refer to this notion as {\it slot identifiability} 
and define it as follows. 
\begin{definition}[Slot Identifiability]
\label{def:slot_identifiability}
    Let $\fb:\Z\to\X$ be a diffeomorphism.
    An inference model $\hat{\gb}: \X\rightarrow \Z$ is said to \emph{slot-identify} $\zb = \gb(\xb)$ via $\hat{\zb}=\hat{\gb}(\xb)=\hat{\gb}(\fb(\zb))$ if for all $k\in [K]$ there exist a unique
    $j\in [K]$ and a diffeomorphism
    $\hb_k: \Z_{k}\rightarrow \Z_{j}$ such that $\hat \zb_j = \hb_k(\zb_k)$ for all $\zb\in\Z$.
\end{definition}
We are now in a position to state our main theoretical result (all complete proofs are provided in~\cref{sec:proofs}).
\begin{restatable}[]{theorem}{identifiability} \label{thm:slot_identifiability}
    Let 
    $\fb : \Z \to \X$ be a diffeomorphism that is compositional (\cref{def:compositional}) with irreducible mechanisms (\cref{def:irreducible_mechanism}). 
    If an inference model $\hat\gb: \X\rightarrow \Z$ is \emph{(i)} a diffeomorphism with \emph{(ii)} compositional inverse $\hat{\fb}=\hat{\gb}^{-1}$, then $\hat{\gb}$
    slot-identifies $\zb = \gb(\xb)$ in the sense of~\cref{def:slot_identifiability}.
\end{restatable}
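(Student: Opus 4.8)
The plan is to study the composition $\mathbf{v} := \hat{\gb}\circ\fb : \Z\to\Z$, a diffeomorphism carrying the ground-truth latents to the inferred ones, $\hat{\zb}=\mathbf{v}(\zb)$. Slot identifiability is equivalent to showing that $\mathbf{v}$ is slot-wise up to permutation, i.e.\ that there is a permutation $\pi$ of $[K]$ and slot-wise diffeomorphisms $\hb_k$ with $\hat{\zb}_{\pi(k)}=\hb_k(\zb_k)$. Concretely, writing $A_{jk}(\zb):=\partial\hat{\zb}_j/\partial\zb_k$ for the $M\times M$ Jacobian blocks of $\mathbf{v}$, I want to show that this $K\times K$ array has exactly one nonzero block per row and column, with a $\zb$-independent pattern.

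First I would record an enabling structural fact: a compositional diffeomorphism has full-rank, square mechanisms. Since $\fb$ is a diffeomorphism, $\Jb\fb(\zb)$ has no zero rows, so every pixel lies in exactly one $I_k(\zb)$ and the $\{I_k(\zb)\}_k$ partition $[N]$. By compositionality (\cref{def:compositional}) the rows of $\Jb\fb$ indexed by $I_k$ are supported only on the slot-$k$ columns, so after a suitable row permutation $\Jb\fb$ is block diagonal with blocks $\Jb\fb_{I_k}$ of size $|I_k|\times M$; invertibility forces each block to be square and invertible, hence $|I_k(\zb)|=M$ and $\Jb\fb_{I_k}(\zb)$ is an invertible $M\times M$ matrix. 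The identical argument applied to $\hat{\fb}$ gives analogous index sets $\hat I_j(\hat{\zb})$ with $|\hat I_j(\hat{\zb})|=M$ and invertible inferred mechanisms $\Jb\hat{\fb}_{\hat I_j}$.

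The core is to control the overlaps $S^{(k)}_j:=I_k(\zb)\cap\hat I_j(\hat{\zb})$ and show each $I_k$ sits inside a single $\hat I_j$. Using the chain rule on $\fb=\hat{\fb}\circ\mathbf{v}$ restricted to the rows $\hat I_j$, compositionality of $\hat{\fb}$ collapses the sum to one term, giving $\Jb\fb_{\hat I_j}=\Jb\hat{\fb}_{\hat I_j}\,(\partial\hat{\zb}_j/\partial\zb)$; since $\Jb\hat{\fb}_{\hat I_j}$ is invertible, $\mathrm{rank}(A_{jk})=\mathrm{rank}(\Jb\fb_{S^{(k)}_j})$. Compositionality of $\fb$ further makes $\Jb\fb_{\hat I_j}$ block diagonal across $\hat I_j=\bigsqcup_k S^{(k)}_j$ (rows $S^{(k)}_j$ occupy only the slot-$k$ columns), so the block ranks sum to $\mathrm{rank}(\Jb\fb_{\hat I_j})=M$; as block $k$ has only $|S^{(k)}_j|$ rows and $\sum_k|S^{(k)}_j|=M$, every block has full row rank, so $\mathrm{rank}(A_{jk})=\mathrm{rank}(\Jb\fb_{S^{(k)}_j})=|S^{(k)}_j|$. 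Now fixing $k$ and summing over $j$ gives $\sum_j\mathrm{rank}(\Jb\fb_{S^{(k)}_j})=\sum_j|S^{(k)}_j|=M=\mathrm{rank}(\Jb\fb_{I_k})$, so the row spaces of $\{\Jb\fb_{S^{(k)}_j}\}_j$ are in direct sum; any partition of them into two nonempty groups then yields \emph{independent} sub-mechanisms (\cref{def:independent_dependent_submechanism}), contradicting irreducibility (\cref{def:irreducible_mechanism}) unless exactly one $S^{(k)}_j$ is nonempty. Hence at every $\zb$ each $I_k(\zb)$ coincides with a single $\hat I_{\pi(k,\zb)}(\hat{\zb})$, $\pi(\cdot,\zb)$ is a permutation, and $A_{jk}(\zb)=0$ whenever $j\neq\pi(k,\zb)$.

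I expect this rank bookkeeping — proving $\mathrm{rank}(A_{jk})=|S^{(k)}_j|$, hence that the sub-mechanisms sum to full rank and become independent — to be the main obstacle, since it is precisely where compositionality of \emph{both} maps and irreducibility of $\fb$ must be combined; the rest is assembly. To finish, I would upgrade the pointwise permutation to a global one: for fixed $k$ the sets $U_{jk}:=\{\zb:\det A_{jk}(\zb)\neq0\}$ are open, pairwise disjoint, and cover the connected space $\Z$, so exactly one equals $\Z$; this makes $\pi$ a single $\zb$-independent permutation and forces $\partial\hat{\zb}_{\pi(k)}/\partial\zb_l\equiv0$ for all $l\neq k$. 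By connectedness of $\Z$, $\hat{\zb}_{\pi(k)}$ is then a function $\hb_k$ of $\zb_k$ alone, and since its Jacobian $A_{\pi(k),k}$ is everywhere invertible and $\mathbf{v}$ is a bijection, $\hb_k$ is a diffeomorphism onto $\Z_{\pi(k)}$. This is exactly the statement of \cref{def:slot_identifiability}.
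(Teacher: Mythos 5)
There is a genuine gap, and it sits precisely where you predicted: your ``enabling structural fact'' is false, and the rank bookkeeping built on it collapses. Nothing in the setup makes the observation dimension equal the latent dimension: $\fb$ is a diffeomorphism from $\Z=\mathbb{R}^{KM}$ onto its image $\X$, which is a $KM$-dimensional manifold embedded in an $N$-pixel space with $N\geq KM$ (in the paper's own experiments, $N=20K$ while $KM=3K$). So $\Jb\fb(\zb)$ is a tall $N\times KM$ matrix of full \emph{column} rank: it can have zero rows (a locally constant pixel), so the $I_k(\zb)$ need not partition $[N]$, and invertibility forces each mechanism $\Jb\fb_{I_k}(\zb)$ only to have full column rank $M$ (\cref{lem:compositional_rank}), not to be a square invertible $M\times M$ block. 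Worse, $|I_k(\zb)|=M$ is \emph{incompatible} with the theorem's hypotheses for $M\geq 2$: an $M$-row mechanism of rank $M$ has linearly independent rows, so \emph{every} partition into $S_1,S_2$ satisfies the rank additivity of \cref{def:independent_dependent_submechanism}, making the mechanism reducible --- this is exactly why the paper enforces slot-output dimension $>M$ when constructing its generator (\cref{subsec:exp1_details}). Under your reading the theorem would be vacuous. Consequently, with $|S^{(k)}_j|$ possibly exceeding $M$, the claim $\rank(\Jb\fb_{S^{(k)}_j})=|S^{(k)}_j|$ fails (one only has $\rank\leq\min(|S^{(k)}_j|,M)$), the count $\sum_j|S^{(k)}_j|=M$ fails, and the direct-sum-of-row-spaces conclusion --- the independence you need in order to invoke irreducibility --- is left unproven.

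The surrounding scaffolding is sound and largely matches the paper. Your chain-rule identity $\rank(A_{jk})=\rank\bigl(\Jb\fb_{S^{(k)}_j}\bigr)$ is correct once ``$\Jb\hat\fb_{\hat I_j}$ invertible'' is weakened to ``$\partial\hat\fb_{\hat I_j}/\partial\hat\zb_j$ has full column rank $M$'' (this is essentially the computation in \cref{prop:at_most_one_object}), the coverage of $\hat I_j$ by the $I_k$'s can be rescued via rank transfer applied to singletons, and your endgame (openness plus connectedness of $\Z$ to globalize the permutation, then slot-wise diffeomorphisms) is the paper's continuity argument. The repair for the crux avoids all cardinality bookkeeping: since $\Jb\fb_{S}(\zb)=\Jb\hat\fb_{S}(\hat\zb)\,\Jb\mathbf{v}(\zb)$ with $\Jb\mathbf{v}$ invertible, ranks transfer between $\fb$ and $\hat\fb$ for \emph{every} pixel subset $S$ (\cref{lem:jacobian_rank_equality}). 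Now partition $I_k(\zb)$ into $S_1:=I_k(\zb)\cap\hat I_j(\hat\zb)$ and $S_2:=I_k(\zb)\setminus\hat I_j(\hat\zb)$: under $\hat\fb$ these are independent because one is supported inside $\hat I_j$ and the other outside (\cref{lem:independent_partition}), so transferring all three ranks gives $\rank(\Jb\fb_{S_1})+\rank(\Jb\fb_{S_2})=\rank(\Jb\fb_{I_k})=M$, contradicting irreducibility of $\fb$ (\cref{def:irreducible_mechanism}) unless $S_1$ or $S_2$ is empty. Hence each nonempty $I_k$ lies inside a single $\hat I_j$ --- the conclusion your direct-sum argument was after --- and from there your assembly goes through.
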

\paragraph{Proof Sketch.}
Irreducibility of $\fb$ ensures that information is shared across different parts of an object, and compositionality of $\fb$ that this information is not shared with other objects. This creates an asymmetry in the latent capacity required to encode the entirety of one object compared to parts of different objects. When $\hat\gb$ satisfies (i) and (ii), this asymmetry can be leveraged to show that each inferred slot $\hat\zb_{j}$ maps to \textit{one} and \textit{only} ground-truth slot $\zb_{k}$ by a \textit{proof by contradiction}. Namely, suppose that $\hat\gb$ maps pixels of two distinct objects to the same slot $j$. If $\hat\gb$ were to encode all latent information required to generate these pixels in slot~$j$, there would not be sufficient total latent capacity to recover the entire scene, leading to a violation of (i) invertibility. Hence, information for at least one of the pixels needs to be distributed across multiple slots, violating (ii) compositionality of $\hat\fb=\hat\gb^{-1}$.

\paragraph{Implications for Object-Centric Learning.}
\cref{thm:slot_identifiability} highlights important conceptual points for object-centric representation learning. First, it shows that distributional assumptions on the latents $\zb$ are not necessary for slot identifiability;
instead, it suffices to enforce structure on the generator $\fb$. This falls in line with state-of-the-art (SOTA) object-centric learning methods~\citep{locatello2020object,Singh2022SimpleUO,Seitzer2022BridgingTG,Elsayed2022SAViTE}, which are based on an auto-encoding framework, thus imposing no additional structure on $p_{\zb}$. However, while these models directly enforce invertibility through the reconstruction objective, it is less clear whether and to what extent they also enforce compositionality. Specifically, compositionality is not explicitly optimized in any object-centric methods. Yet, the success of SOTA models %
in practice suggests that it may be implicitly enforced to some extent through additional inductive biases in the model. We explore this point empirically (see~\cref{fig:image_models_training_curves}) and leave a more theoretical exploration for future work.

\cref{thm:slot_identifiability} also emphasizes that using a restricted latent bottleneck plays an important role in achieving slot identifiability. Specifically, \cref{thm:slot_identifiability} is predicated on $\mathrm{dim}(\zb){=}\mathrm{dim}(\hat\zb)$ and would no longer hold in its current form if $\mathrm{dim}(\zb){<}\mathrm{dim}(\hat\zb)$. The importance of restricting the latent capacity of object-centric models was emphasized empirically by~\citet{engelcke2020reconstruction}. Yet, the most successful object-centric models in practice often use $\mathrm{dim}(\zb){<}\mathrm{dim}(\hat\zb)$~\citep{dittadi2021generalization,locatello2020object,Sajjadi2022ObjectSR}. A potential explanation for this discrepancy is that SOTA object-centric models do encode information from multiple objects in each latent slot, but this additional information is ignored by the decoder during reconstruction such that image-level segmentations remain accurate. We provide some evidence for this hypothesis through experiments with existing object-centric models in~\cref{subsec:exp2}.

\paragraph{Measuring Compositionality.}
While \cref{thm:slot_identifiability} reveals properties an inference function should satisfy to achieve slot identifiability, it presents these properties in an abstract mathematical form. If we seek to leverage \cref{thm:slot_identifiability} to assess the performance of existing object-centric models or inform new training objectives for object-centric learning, we require a way to quantify whether an inference model is (i)~a diffeomorphism and (ii) compositional.
Regarding~(i), 
one clear choice is to train an auto-encoder with differentiable encoder~$\hat\gb$ and decoder~$\hat\fb$ and minimize reconstruction loss to enforce invertibility.
Regarding (ii), on the other hand, it is much less obvious how to quantify compositionality. To this end, we introduce the following contrast function, which we prove to be zero if and only if a function is compositional:
\begin{definition}[Compositional Contrast] \label{def:compositional_contrast}
    Let $\fb: \Z\rightarrow \X$ be differentiable. The \emph{compositional contrast} of $\fb$ at $\zb$
    is
    \begin{equation} \label{eq:compositional_contrast}
        \Ccomp(\fb, \zb) = \sum\limits_{n=1}^{N} 
        \sum\limits_{\substack{k=1
        }}^{K}
        \sum\limits_{j=k+1}^{K}
        \left\|\frac{\partial f_{n}}{\partial \zb_{k}}(\zb)\right\|
        \left\|\frac{\partial f_{n}}{\partial \zb_{j}}(\zb)\right\|
        \,.
    \end{equation}%
\end{definition}%
For a given scene representation $\zb$ and generator $\fb$, the contrast function in~\cref{eq:compositional_contrast} computes the sum over all pixels~$n$ of all pairwise products of the (L2) norms of those pixels' gradients with respect to any two distinct slots $k\neq j$.
As such, it is a non-negative quantity that can only be zero if every pixel is affected by at most one slot (i.e., $\fb$ is \textit{compositional}), for otherwise there would be a pair of slots $k\neq j$ for which the gradient norms are both non-zero resulting in their product being non-zero. 

We leverage this characterization of compositionality to provide our second result, which can be viewed as an optimization-based perspective on~\cref{thm:slot_identifiability}.
\begin{restatable}[]{theorem}{contrast} \label{thm:compositional_contrast_identifiable}
    Let 
    $\fb : \Z \to \X$ be a diffeomorphism that is compositional (\cref{def:compositional}) with irreducible mechanisms (\cref{def:irreducible_mechanism}).
    If an encoder $\hat\gb: \X\rightarrow \Z$ and decoder~$\hat\fb: \Z\rightarrow \X$ are both differentiable and solve the following functional equation
    \begin{equation}
    \label{eq:objective}
        \mathbb{E}_{\xx \sim p_\xx}\left[\left\|\hat \fb(\hat\gb(\xb))-\xb\right\|^{2}_{2} + \lambda \Ccomp\left(\hat\fb, \hat\gb(\xb)\right)\right] = 0,
    \end{equation}
    for $\lambda >0$, then $\hat\gb$ slot-identifies $\zb$ in the sense of \cref{def:slot_identifiability}.
\end{restatable}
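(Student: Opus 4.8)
The plan is to reduce \cref{thm:compositional_contrast_identifiable} to \cref{thm:slot_identifiability}: I would show that any differentiable pair $(\hat\gb,\hat\fb)$ solving the functional equation~\cref{eq:objective} automatically satisfies the two hypotheses of \cref{thm:slot_identifiability}, namely that \emph{(i)}~$\hat\gb$ is a diffeomorphism and \emph{(ii)}~its inverse $\hat\fb=\hat\gb^{-1}$ is compositional. Once these are established, \cref{thm:slot_identifiability} delivers slot identifiability directly, so no fresh identifiability argument is required.

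First I would split the objective using non-negativity. The reconstruction term $\|\hat\fb(\hat\gb(\xb))-\xb\|_2^2$ is a squared norm, and the compositional contrast $\Ccomp(\hat\fb,\hat\gb(\xb))$ is a sum of products of gradient norms; both are non-negative, and $\lambda>0$. Since $p_\zb$ is fully supported on $\Z$ and $\fb$ is a diffeomorphism, the induced density $p_\xb$ is fully supported on $\X$, and since $\hat\fb,\hat\gb$ are differentiable the integrand is continuous. A continuous non-negative function whose integral against a fully supported measure vanishes must be identically zero on the support, so for every $\xb\in\X$ both terms vanish: $\hat\fb(\hat\gb(\xb))=\xb$ and $\Ccomp(\hat\fb,\hat\gb(\xb))=0$.

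Next I would extract \emph{(i)} and \emph{(ii)}. Perfect reconstruction $\hat\fb\circ\hat\gb=\mathrm{id}_\X$ shows $\hat\gb$ has a left inverse, hence is injective; differentiating by the chain rule gives $\Jb\hat\fb(\hat\gb(\xb))\,\Jb\hat\gb(\xb)=\mathbf{I}$, forcing $\Jb\hat\gb(\xb)$ to be invertible at every $\xb$, so by the inverse function theorem $\hat\gb$ is a local diffeomorphism and therefore a diffeomorphism onto its open image, with $\hat\fb$ as its inverse there (condition~\emph{(i)}). For condition~\emph{(ii)}, the pointwise identity $\Ccomp(\hat\fb,\hat\gb(\xb))=0$ over the image of $\hat\gb$, combined with the characterization stated below \cref{def:compositional_contrast} that the contrast vanishes exactly when a map is compositional, shows that $\hat\fb$ is compositional on this domain. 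Feeding \emph{(i)} and \emph{(ii)} into \cref{thm:slot_identifiability} then yields the conclusion.

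The step I expect to be the main obstacle is \emph{surjectivity}. Pixel-space reconstruction alone only makes $\hat\gb$ a diffeomorphism onto its image $\hat\gb(\X)\subseteq\Z$, and an injective local diffeomorphism of $\mathbb{R}^{KM}$ need not be onto (e.g.\ $\arctan$ on $\mathbb{R}$). This matters because \cref{def:slot_identifiability} demands slot-wise maps $\hb_k:\Z_k\to\Z_j$ that are genuine diffeomorphisms onto $\Z_j=\mathbb{R}^{M}$, and because invoking \cref{thm:slot_identifiability} verbatim needs $\hat\gb$ to be a diffeomorphism onto all of $\Z$. Closing this gap is the crux: one must show $\hat\gb(\X)=\Z$, for instance via a properness or clopen-image argument---the image is already open, so it suffices to show it is also closed in the connected space $\Z$---or else interpret the guarantee as identification on the image of $\hat\gb$. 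By comparison, the non-negativity split, the measure-theoretic ``vanishes everywhere'' deduction, and the contrast characterization are all routine.
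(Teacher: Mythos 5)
Your proposal is correct and takes essentially the same route as the paper's proof: split the objective by non-negativity, use full support of $p_\xb$ (inherited from $p_\zb$ through the diffeomorphism $\fb$) to get pointwise perfect reconstruction and vanishing contrast, invoke the contrast characterization (\cref{lem:compositional_contrast}) to conclude $\hat\fb$ is compositional, and reduce to \cref{thm:slot_identifiability}. The surjectivity obstacle you flag is real---the paper's proof simply asserts that $\hat\fb=\hat\gb^{-1}$ on all of $\X$ makes $\hat\gb$ a diffeomorphism onto $\Z$---and your clopen sketch does close it: the image $\hat\gb(\X)$ is open by the inverse function theorem, and it is closed because $\hat\fb$ is globally defined and continuous on $\Z$ with values in $\X$ (if $\hat\gb(\xb_n)\to\zb_0$ then $\xb_n=\hat\fb(\hat\gb(\xb_n))\to\hat\fb(\zb_0)\in\X$, whence $\zb_0=\hat\gb(\hat\fb(\zb_0))\in\hat\gb(\X)$), so connectedness of $\Z=\mathbb{R}^{KM}$ yields $\hat\gb(\X)=\Z$.
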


\section{Related Work} \label{sec:related_work}
\paragraph{Object-Centric Generative Models.}
Prior works have also formulated generative models for multi-object scenes based on latent slots~\citep{LeRoux2011LearningAG,Heess2012LearningGM,greff2015binding,greff2017neural,greff2019multi,van2018relational,von2020towards,engelcke2019genesis,Engelcke2021GENESISV2IU}, though without studying identifiability. 
Our assumptions on the
generative model~(\cref{sec:model}) bear intuitive similarity to some of these prior works, but they also differ in several fundamental ways.
First, 
compositionality~(\cref{def:compositional})
is stated as a desideratum for nearly all object-centric generative models.
Yet, this constraint is not actually enforced by most existing approaches, particularly those 
based on spatial mixture models in which every slot may affect every pixel~\citep{greff2015binding,greff2017neural,greff2019multi,van2018relational,engelcke2019genesis,Engelcke2021GENESISV2IU}.
More closely related is a dead-leaves model approach, in which a scene is sequentially generated by layering objects such that each pixel is affected by at most one slot~\citep{LeRoux2011LearningAG, von2020towards, tangemann2021unsupervised}. In contrast, we define compositionality directly through assumptions on the structure of the (Jacobian of the) generator.
Second, 
our irreducibility criterion~(\cref{def:independent_dependent_submechanism,def:irreducible_mechanism}) bears conceptual similarity to prior works, which assume that different objects do not share information whereas parts of the same object do~\citep{Hyvrinen2006LearningTS,greff2015binding,greff2017neural,van2018relational}.
Importantly, however, these works formalize this intuition using statistical criteria such as \textit{statistical independence} between pixels from different objects and dependence between pixels from the same object.
However, this leads to an incorrect characterization of objects: e.g., the presence of a coffee cup should increase the likelihood that a table is also present, despite these being separate objects~\citep{trauble2021disentangled,scholkopf2021toward}.
Here, we instead formulate independence/dependence between objects in a \textit{non-statistical} sense, inspired by algorithmic independence of mechanisms.

\paragraph{Objects and Causal Mechanisms.}
In causal modelling~\citep{spirtes2001causation,pearl_2009}, a \textit{mechanism} typically refers to a function that determines the value of an effect variable from its direct causes and possibly a noise term, leading to a conditional distribution of effect given causes.
Thus, we could view objects as the effects of the latent variables that cause them.
While the causal variables are generally not independent, it has been argued that the mechanisms producing them should be~\citep{scholkopf2012causal,peters2017elements}. Since this is an independence between functions or conditionals rather than between random variables, it is non-trivial to formalize it statistically~\citep{janzing2010causal,guo2022causal}.
Hence, various implementations of the principle have been proposed~\citep{daniusis2010inferring,janzing2010telling,janzing2012information,shajarisales2015telling,locatello2018competitive,besserve2018group,besserve2021theory,Janzing+2021+285+301}, typically for settings in which both cause and effect are observed.
Our notion of independent sub-mechanisms
is most closely related to work by~\citet{gresele2021independent}, who also study representation learning and define mechanisms more broadly in terms of the Jacobian~$\Jb\fb$:
they assume independent latents and formalize mechanism independence  as column-orthogonality of the Jacobian. In contrast, our rank condition~(\cref{eq:independent_dependent_submechanism_independent}) is inspired by object-centric representation learning with dependent latents.

\paragraph{Identifiable Representation Learning.}
\looseness-1 As this is the first identifiability study of unsupervised object-centric representations, our problem setting differs from existing work both in terms of the assumptions we make on the generative process and the type of identifiability that we aim to achieve.
First, prior work on identifiable representation learning commonly places assumptions on the latent distribution, such as conditional independence given an auxiliary variable~\citep{hyvarinen2016unsupervised,Hyvrinen2017NonlinearIO,hyvarinen2019nonlinear,khemakhem2020variational,halva2020hidden,Halva2021} or access to views arising from pairs of similar latents ~\citep{gresele2020incomplete,klindt2020towards,zimmermann2021contrastive,von2021self}, while leaving the generator $\fb$ completely unconstrained. In contrast, we place no assumptions on $p_{\zb}$ and instead impose structure on (the Jacobian of) the generator $\fb$. 
Recent works have also leveraged assumptions on $\Jb\fb$ such as orthogonality~\citep{gresele2021independent,zheng2022on,Reizinger2022EmbraceTG,Buchholz2022FunctionCF}, unit determinant~\citep{yang2022nonlinear}, or a \textit{fixed} sparsity structure~\citep{moran2021identifiable,lachapelle2021disentanglement,lachapelle2022partial}.
While the latter relates to our definition of compositionality~(\cref{def:compositional}), 
we crucially allow the sparsity pattern 
on $\Jb\fb$ to vary with $\zb$ (in line with the basic notion that objects are not fixed in space), and impose sparsity with respect to slots rather than individual latents.  
Secondly, existing work typically aims to identify individual latent components $z_i$ up to permutations (or linear transformations). However, this is inappropriate for object-centric representation learning, where we aim to capture and isolate the subsets of latents corresponding to each object in well-defined slots. Identifying such groups of latents is similar to efforts in independent subspace analysis~\citep[ISA;][]{DBLP:journals/neco/HyvarinenH00}. However, results for ISA are generally restricted to linear models and independent groups, whereas we allow for nonlinear models and dependence.
Our notion of slot identifiability is closest related to that of block-identifiability introduced by~\citep{von2021self} 
and can be seen as an extension or generalization thereof to a setting with multiple blocks.

\begin{figure*}[tb]
    \centering
    \includegraphics[width=\linewidth]{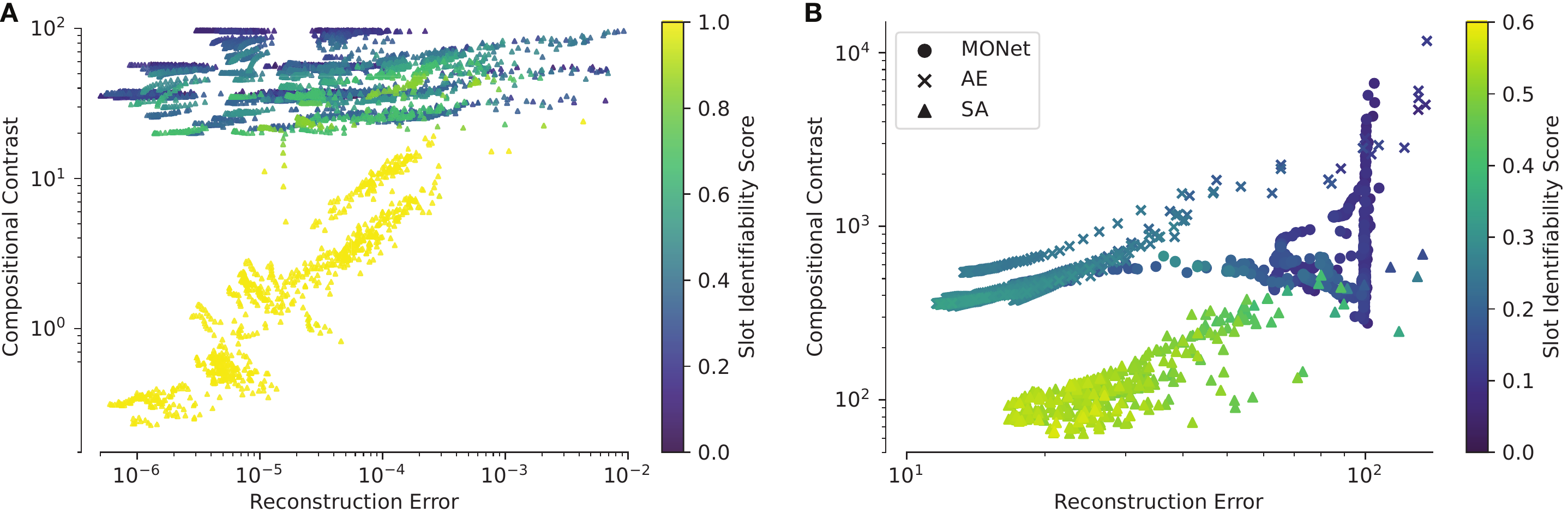}
    \vspace{-2em}
    \caption{\textbf{(A) Experimental validation of \cref{thm:compositional_contrast_identifiable}.} We trained models on synthetic data generated according to \cref{sec:model} with 2, 3, 5 independent latent slots (see~\cref{subsec:exp1}). The color coding indicates the level of identifiability achieved by the model, measured by the Slot Identifiability Score (SIS), where higher values correspond to more identifiable models. As predicted by our theory, if a model sufficiently minimizes both reconstruction error and compositional contrast, then it identifies the ground-truth latent slots.
    \textbf{(B) Application of \cref{thm:compositional_contrast_identifiable} to existing object-centric models.} We train 3 existing object-centric architectures---MONet, Slot Attention (SA), and an additive auto-encoder (AE)---on image data and visualize their SIS as a function of both reconstruction error and compositional contrast. We see across models that, in general, SIS increases as reconstruction error and compositional contrast are minimized.}
    \label{fig:results_toy_image_experiments}
\end{figure*}

\section{Experiments}
\label{sec:experiments}
\Cref{thm:compositional_contrast_identifiable} states that inference models which minimize reconstruction loss $\mathcal{L}_{\mathrm{rec}}$ and compositional contrast $\Ccomp$ achieve \emph{slot identifiability} (\cref{def:slot_identifiability}). This provides a concrete way to empirically test our main theoretical result.
To do so, we perform two main sets of experiments. First, in \cref{subsec:exp1} we generate controlled synthetic data according to the process specified in \cref{sec:model} and train an inference model on this data which directly optimizes $\mathcal{L}_{\mathrm{rec}}$ and $\Ccomp$ jointly. Second, in \cref{subsec:exp2} we seek to better understand the relationship between $\mathcal{L}_{\mathrm{rec}}$, $\Ccomp$, and slot identifiability in existing object-centric models. To this end, we analyze a set of models trained on a multi-object sprites dataset. %

\paragraph{Quantifying Slot Identifiability.}
To assess whether a model is slot identifiable in practice, we first establish a metric to measure slot identifiability. Specifically, we want to measure if there exists an invertible function between each ground-truth and exactly one inferred latent slot. To this
end, we first fit nonlinear models between inferred and ground-truth slots and measure their quality by the $R^{2}$ coefficient of determination. %
To properly measure this $R^{2}$ score, we must first match each ground-truth slot to its corresponding inferred slot as permutations could exist between slots. For our experiments in \cref{subsec:exp1}, this permutation will be global i.e. the same for all inferred latents, thus we use the Hungarian Algorithm~\citep{kuhn1955hungarian} to find the optimal matching based on the $R^{2}$ scores for models fit between every pair of slots. For our experiments on image data in \cref{subsec:exp2}, however, such a permutation will be local due to the permutation invariance of the generator. To resolve this, we follow a procedure similar to that of~\citet{locatello2020object} and \citet{dittadi2021generalization} using online matching when fitting models between slots. Specifically, at every training iteration, we compute a matching loss for each sample for all possible pairings of ground-truth and inferred slots and use the Hungarian algorithm to find the optimal assignment for minimizing this loss. After resolving permutations, the $R^{2}$ scores for the matched slots tell us how much information about each ground-truth slot is contained in one inferred slot. We also need to ensure, however, that inferred slots only contain information about one ground-truth slot and not multiple. To this end, we correct this score by subtracting the maximum $R^{2}$ score from models fit between each inferred latent slot and the ground-truth slots that it was not previously matched with. %
Taking the mean of this score across all slots yields the final score, which we refer to as the \emph{slot identifiability score} (SIS). Further details on the metric are given in~\cref{subsec:sis_details}.%

\subsection{Synthetic Data} \label{subsec:exp1}
\paragraph{Experimental Setup.}
To generate synthetic data according to \cref{sec:model}, we first sample a $KM$-dimensional latent vector from a normal distribution $p_\zb = \mathcal{N}(0, \Sigma)$, where we consider scenarios with both statistically independent latents ($\Sigma=\Ib$) and dependent latents ($\Sigma\sim\mathrm{Wishart}_{KM}(\Ib, KM)$). We then partition the latent vector into $K$ slots, each with dimension $M$, and apply the same multi-layer perceptron (MLP) to each of the $K$ slots separately. The MLP has $2$ layers, uses LeakyReLU non-linearities, and is chosen to lead to invertibility almost surely by following the settings used in previous work~\citep{hyvarinen2016unsupervised,Hyvrinen2017NonlinearIO,zimmermann2021contrastive}. Observations $\xb$ are obtained by concatenating the slot-wise MLP outputs such that the generator is compositional according 
 to~\cref{def:compositional} as well as invertible.\footnote{Regarding enforcing irreducibility, see~\cref{subsec:exp1_details}.}
We train models with a number of slots $K\in\{2,3,5\}$ and $\lambda\in\{10^{-7},10^{-5},10^{-2},0,1,10\}$ (see~\cref{thm:compositional_contrast_identifiable}) each across 10 random seeds (180 models in total). In all cases, we use slot-dimension $M=3$ and slot-output dimension of 20 such that $\mathrm{dim}(\xb)
= K\cdot20$. Further details on this setup may be found in~\cref{subsec:exp1_details}.

\paragraph{Results.}
In \cref{fig:results_toy_image_experiments}A, we visualize the SIS as a function of the reconstruction error and compositional contrast for independent latents for all $K\in\{2,3,5\}$. We normalize $\mathcal{L}_{\mathrm{rec}}$ and $\Ccomp$ to ensure that their scores are comparable across different $K$, which we discuss in further detail in \cref{subsec:c_comp_ext}. As predicted by \cref{thm:compositional_contrast_identifiable}, we can see that all models that minimize both objectives jointly yield high SIS, whereas models that fail to minimize, e.g., the compositional contrast achieve subpar identifiability. Results for dependent latents yield a similar trend which can be seen in \cref{fig:toy_dependent_latents}.%

\subsection{Existing Object-Centric Models} \label{subsec:exp2}
\paragraph{Experimental Setup.}
We now aim to understand the predictions made by our theory in the context of existing object-centric models trained on image data. To this end, we consider image data generated by the Spriteworld renderer \citep{spriteworld19}. Specifically, we generate images with $2$ to $4$ objects, each described by $4$ continuous (size, color, x/y position) and $1$ discrete (shape) independent latent factors. Samples of this dataset are shown in~\cref{fig:sprites_sample}. We investigate three object-centric approaches on this data: Slot Attention \citep{locatello2020object}, MONet \citep{burgess2019monet}, and an additive auto-encoder. We train all models with $4$ latent slots, each with dimension $16$, leading to an inferred latent dimension larger than the ground-truth. This discrepancy between inferred and ground-truth latent dimensionality is ubiquitous in existing object-centric models. However, it violates our theoretical assumptions which require equal dimensions. See~\cref{subsec:exp2_details} for further experimental details.

\paragraph{Results.}
SIS as a function of reconstruction error and compositional contrast is shown in \cref{fig:results_toy_image_experiments}B. Similar to \cref{fig:results_toy_image_experiments}A,
SIS tends to increase as $\mathcal{L}_{\mathrm{rec}}$ and $\Ccomp$ are minimized, highlighting that our theory holds predictive power for slot identifiability in existing object-centric models. Notably, this is in spite of our theoretical assumptions not being exactly met due to the inferred latent dimension exceeding the ground-truth. This mismatch in dimension does seem to have an effect on SIS, however, which can be seen in~\cref{fig:joint_results_spriteworld_on_off_and_on_vs_off_and_on_and_off}. Here, we can see that the  subtracted $R^{2}$ score in the SIS computation is non-zero across models suggesting that these models are using their additional latent capacity to encode information from multiple objects, despite the decoder presumably not using this information during reconstruction.

\section{Discussion}
\label{sec:discussion}

\paragraph{Limitations of Experiments.}
We emphasize that the main goal of this work is to create a theoretical foundation for object-centric learning. Hence, we focus our experiments on validating \cref{thm:compositional_contrast_identifiable} (\cref{subsec:exp1}) and
exploring our theoretical predictions in existing object-centric models (\cref{subsec:exp2}). While our experiments in \cref{subsec:exp2} provide evidence that existing models which minimize $\mathcal{L}_{\mathrm{rec}}$ and $\Ccomp$ achieve higher SIS, scaling up these experiments to more models and datasets would lead to a more comprehensive understanding of the exact extent to which the performance of existing models can be understood from our theory. We leave such a larger empirical study for future work.

\paragraph{Limitations of Theory.}
While we believe that our theoretical assumptions capture the essence of important concepts in object-centric learning, they will be violated to various degrees in practical scenarios.  For example, the assumption of compositionality~(\cref{def:compositional}) on the generator~$\fb$ is broken by translucency/reflection, as  a single pixel can then be affected by multiple latent slots. Additionally, occlusions are not yet fully covered by our theory, as pixels at the border of occluding objects would be affected by multiple latent slots. Additionally, it is common to assume in practice that the generator $\fb$ is invariant to permutations of the latent slots it acts on. %
This permutation invariance leads to a lack of invertibility of $\fb$, however, as permuted latents will give rise to the same observation. We anticipate that our theoretical results can be adapted to incorporate such a permutation invariant generator but leave this for future work.

\paragraph{Relationship to Existing Definitions of Objects.}
Under our framework, groups of pixels corresponding to an object have the property that the latent capacity needed to encode partitions of these pixels separately exceeds the latent capacity needed to encode the pixels as a whole (\cref{def:irreducible_mechanism}). Intuitively, this implies that there is latent information shared across different parts of an object. By considering the location of objects as one such latent information, our definition relates to the Gestalt law of common fate~\citep{Koffka1936PrinciplesOG,tangemann2021unsupervised} and the concept of a Spelke Object~\citep{Spelke1990PrinciplesOO,Chen2022UnsupervisedSI} which posit that pixels belonging to the same object move together. Furthermore, by considering color or texture as shared latent information, our definition relates to the Gestalt law of similarity~\citep{Koffka1936PrinciplesOG} that posits that items sharing visual features tend to be grouped together as a single object.

\paragraph{Extensions of Theory.}
While our theoretical results provide relatively general conditions under which object-centric representations can be identified, there are several potential ways our results could be extended.
First, we hypothesize that the reverse implication of our main result may hold as well, i.e., given the generative model in \cref{sec:theory}, compositionality and invertibility are not only sufficient but also necessary conditions for slot identifiability.
A formal proof of this conjecture would further highlight the importance of these properties.
Additionally, it would be interesting to aim to extend our theoretical approach to identifying not just objects but also abstractions such as part-whole hierarchies~\citep{Hinton2021HowTR} or individual object attributes. In this case, our notion of compositionality would need to be adjusted to account for abstractions that interact during generation. Lastly, it would be interesting to extend our results 
to leverage weakly-supervised information, such as motion,
which has been shown empirically to be helpful for object-centric learning~\citep{tangemann2021unsupervised,kipf2021conditional,Elsayed2022SAViTE,Chen2022UnsupervisedSI}.

\paragraph{Optimizing $\Ccomp$ in Object-Centric Models.}
While creating a new method for object-centric learning is not the focus of this work, one question based on \cref{thm:compositional_contrast_identifiable} is whether $\Ccomp$ can be optimized directly in object-centric models on image data 
to improve slot identifiability. In this setting, explicitly optimizing $\Ccomp$, as was done in~\cref{subsec:exp1}, is challenging as the contrast in its current form is based on Jacobians. Thus, naively optimizing it through gradient descent corresponds to second-order optimization, which creates computational challenges for larger models and data dimensionalities. As previously noted, it could also be the case that there exist implicit ways to enforce that $\Ccomp$ is minimized, which could be occurring to some extent through inductive biases in existing object-centric models. We leave finding computationally efficient ways to minimize $\Ccomp$, whether explicit or implicit, for future work.
\paragraph{Concluding Remarks.}
Representing scenes in terms of objects is a key aspect of visual intelligence and an important component of generalization in humans. While empirical object-centric learning methods are increasingly successful, we have thus far been lacking a precise theoretical understanding of what properties of the data and model are sufficient to provably learn object-centric representations. To the best of our knowledge, this work is the first to provide such a theoretical understanding. Along with invertibility, two intuitive assumptions on the generator---compositionality and irreducibility--are sufficient to identify the ground-truth object representations. By extending identifiability theory towards object-centric learning, we hope to facilitate a deeper understanding of existing object-centric models as well as provide a solid foundation for the next generation of models to build upon.
\paragraph{Author Contributions.}
JB developed the theory with technical help from RSZ, insight from YS, and advising from WB and JvK. JB implemented and executed the experiments with help from RSZ and YS, while RSZ implemented the $\Ccomp$ and SIS metrics on image data. JB and JvK led the writing of the manuscript with help from WB, BS, and RSZ. WB and RSZ created all figures in the manuscript.

\paragraph{Acknowledgments.}
We thank: the anonymous reviewers for helpful suggestions which led to improvements in the manuscript, Andrea Dittadi for helpful discussions regarding experiments, Attila Juhos for pointing out an issue with \cref{thm:slot_identifiability}, Amin Charusaie, Michel Besserve, and Simon Buchholz for helpful technical discussions, and Zac Cranko for theoretical efforts in the early stages of the project.

This work was supported by the German Federal Ministry of Education and Research (BMBF): Tübingen AI Center, FKZ: 01IS18039A, 01IS18039B. WB acknowledges financial support via an Emmy Noether Grant funded by the German Research Foundation (DFG) under grant no. BR 6382/1-1 and via the Open Philantropy Foundation funded by the Good Ventures Foundation. WB is a member of the Machine Learning Cluster of Excellence, EXC number 2064/1 – Project number 390727645.
The authors thank the International Max Planck Research School for Intelligent Systems (IMPRS-IS) for supporting RSZ and YS.

\bibliography{references}
\bibliographystyle{icml2022}

\newpage
\appendix
\onecolumn
\section{Proofs}
\label{sec:proofs}
In this section, we present the proofs for the results presented in the main text. 
First, we recall our notation:
\paragraph{Notation.} $N$ will denote the dimensionality of observations ${\bf x}$, $K$ the number of latent slots, and $M$ the dimensionality of each latent slot ${\bf z_{k}}$. For $n\in\mathbb{N}$, $[n]$ will denote the set of natural numbers from $1$ to $n$, i.e., $[n] := \{1,\dots, n\}$. If $\fb$ is a function with $n$ component functions, then $\fb_S$ will denote the restriction of $\fb$ to the component functions indexed by $S\subseteq[n]$, i.e. $\fb_{S} :=(f_s)_{s\in S}$ where $\fb_{S}$ is ordered according to the natural ordering of the elements of $S$. Additionally, when restricting $\fb$ to the component functions indexed by $I_k(\zb)$, defined according to \cref{eqn:compositional_index_sets}, we will drop the dependence on $\zb$ for notional convenience i.e. $\fb_{I_k}(\zb) := \fb_{I_k(\zb)}(\zb)$. For functions $\fb, \hat\fb$, we will use ${I_k}(\zb), \hat{I_k}(\hat\zb)$, respectively, to distinguish between the indices defined for each function according to \cref{eqn:compositional_index_sets}. Lastly, we will slightly abuse notation and use ${\bf 0}$ to denote both the zero vector and a matrix whose entries are all $0$.

We begin by proving several lemmata which will be leveraged for our main theoretical result. We start with the intuitive result that sub-mechanisms from different latent slots are independent in the sense of~\cref{def:independent_dependent_submechanism}.
\vspace{\topsep}
\begin{lemma}[Sub-Mechanisms of Distinct Mechanisms are Independent] \label{lem:independent_partition}
    Let $\fb$ be a diffeomorphism that is compositional (\cref{def:compositional}), and let $S_1,S_2 \subseteq [N]$ be nonempty.
    $\forall \zb\in \Z, k\in [K]$, if $S_1\subseteq I_k(\zb)$, $S_2 \cap I_k(\zb) =\emptyset$, then sub-mechanisms $\Jb\fb_{S_1}(\zb)$, $\Jb\fb_{S_2}(\zb)$ are independent in the sense of \cref{def:independent_dependent_submechanism}.
\end{lemma}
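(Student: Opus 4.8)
The plan is to exploit the fact that compositionality forces the Jacobian to have a block-diagonal structure (up to row permutation), so that the rows indexed by $S_1 \subseteq I_k(\zb)$ and the rows indexed by $S_2$ (disjoint from $I_k(\zb)$) live in complementary sets of columns. Concretely, I would first partition the columns of $\Jb\fb(\zb)$ according to the slot index: columns $1,\dots,KM$ split into $K$ blocks of size $M$, one per slot.

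First I would analyze the column support of $\Jb\fb_{S_1}(\zb)$. Since $S_1 \subseteq I_k(\zb)$ and $\fb$ is compositional, for any row $n \in S_1$ we have $\frac{\partial f_n}{\partial \zb_j}(\zb) = \bf 0$ for every $j \neq k$, because $n \in I_k(\zb)$ together with compositionality (\cref{def:compositional}) forces $n \notin I_j(\zb)$ for $j \neq k$. Hence all nonzero entries of $\Jb\fb_{S_1}(\zb)$ lie within the $M$ columns belonging to slot $k$. Next I would analyze $\Jb\fb_{S_2}(\zb)$: since $S_2 \cap I_k(\zb) = \emptyset$, every row $n \in S_2$ satisfies $\frac{\partial f_n}{\partial \zb_k}(\zb) = \bf 0$, so all nonzero entries of $\Jb\fb_{S_2}(\zb)$ lie \emph{outside} the slot-$k$ columns. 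Thus the column supports of the two sub-mechanisms are disjoint.

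The crux is then a standard linear-algebra fact: if two matrices (with the same number of columns) have nonzero entries confined to disjoint sets of columns, then stacking them vertically yields a matrix whose rank is the sum of the individual ranks. I would argue this by noting that the row space of $\Jb\fb_{S_1}(\zb)$ is contained in the coordinate subspace spanned by the slot-$k$ columns, the row space of $\Jb\fb_{S_2}(\zb)$ is contained in the complementary coordinate subspace, and these two coordinate subspaces intersect only in $\bf 0$. Therefore the row space of $\Jb\fb_{S_1\cup S_2}(\zb)$ is the (internal) direct sum of the two row spaces, giving
\begin{equation*}
    \rank\left(\Jb\fb_{S_1\cup S_2}(\zb)\right) = \rank\left(\Jb\fb_{S_1}(\zb)\right) + \rank\left(\Jb\fb_{S_2}(\zb)\right),
\end{equation*}
which is exactly the independence condition of \cref{def:independent_dependent_submechanism}.

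I do not expect a serious obstacle here; this lemma is essentially a bookkeeping argument that translates the combinatorial disjointness of index sets into a direct-sum decomposition of row spaces. The only point requiring mild care is being explicit that compositionality applies pointwise at the \emph{fixed} $\zb$ in question (so the block structure and the relevant index sets are those defined at that $\zb$), and that a nonempty row set is needed so that each sub-mechanism is well-defined — both of which are guaranteed by the hypotheses.
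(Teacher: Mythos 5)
Your proof is correct and takes essentially the same route as the paper's: compositionality confines the rows of $\Jb\fb_{S_1}(\zb)$ to the slot-$k$ columns while every row of $\Jb\fb_{S_2}(\zb)$ vanishes in those columns, and the resulting disjoint column supports make the rank of the vertically concatenated matrix (equal to $\Jb\fb_{S_1\cup S_2}(\zb)$ up to a rank-preserving row permutation) split as the sum of the two ranks. If anything, you are slightly more explicit than the paper's write-up about where compositionality enters (to zero out the non-slot-$k$ entries of the $S_1$ rows, without which rank additivity would fail), but the underlying argument is the same.
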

\begin{proof}\vspace{-1.8\topsep}
    From the definition of $I_k(\zb)$ in \cref{eqn:compositional_index_sets} it follows that:
    \begin{align*}
        \forall n \in [N]: \quad  \frac{\partial f_n}{\partial \zb_k}(\zb) \neq {\bf 0} \implies n \in I_k(\zb).
    \end{align*}
    Since $S_1\subseteq {I_k(\zb)}$, we know that $\forall n \in S_1: \frac{\partial f_n}{\partial \zb_k}(\zb) \neq {\bf 0}$. Further, since $S_2 \cap {I_k(\zb)} = \emptyset$ it means that $\forall n \in S_2: \frac{\partial f_n}{\partial \zb_k}(\zb) = {\bf 0}$. 
    Put differently, this means that rows of $\Jb\fb_{S_1}(\zb)$ are non-zero for those rows where $\Jb\fb_{S_2}(\zb)$ vanishes and vice versa.
    Therefore, one cannot represent any column of $\Jb\fb_{S_1}(\zb)$ as a linear combination of those of $\Jb\fb_{S_2}(\zb)$. Hence, \begin{align*}
        \rank \left(\Jb\fb_{S_1}(\zb)\right) + \rank \left(\Jb\fb_{S_2}(\zb)\right) = \rank\left( [\Jb\fb_{S_1}(\zb); \Jb\fb_{S_2}(\zb)]\right),
    \end{align*}
    where $[\,\cdot\,;\,\cdot\,]$ denotes vertical concatenation.
    Note that the RHS is equal to $\Jb\fb_{S_1 \cup S_2}(\zb)$ up to permutations of rows (which do not change the rank). Thus, \cref{eq:independent_dependent_submechanism_independent} holds for $S_1, S_2$ showing that $\Jb\fb_{S_1}(\zb)$, $\Jb\fb_{S_2}(\zb)$ are independent in the sense of \cref{def:independent_dependent_submechanism}.
\end{proof}

We next show that the rank of each sub-mechanism is less than or equal to the latent slot-dimension dimension, $M$.
\vspace{\topsep}
\begin{lemma} \label{lem:rank_upper_bound}
    Let $\fb: \Z\rightarrow \X$ be a  diffeomorphism that is compositional (\cref{def:compositional}). $\forall \zb\in \Z, k\in [K]$, if 
    $S \subseteq I_k(\zb)$ is non-empty:
    \begin{align}
        \rank \left( \Jb\fb_{S}(\zb)\right) \leq M.
    \end{align}
\end{lemma}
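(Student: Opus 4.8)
The plan is to exploit compositionality to show that every row of the restricted Jacobian $\Jb\fb_S(\zb)$ can only have nonzero entries in the $M$ columns associated with slot $k$, so that the matrix effectively has at most $M$ nonzero columns and hence rank at most $M$.

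First I would unpack what $S\subseteq I_k(\zb)$ together with compositionality gives us at the level of individual rows. By \cref{def:compositional}, for $k\neq j$ we have $I_k(\zb)\cap I_j(\zb)=\emptyset$. Since every $n\in S$ lies in $I_k(\zb)$, it follows that $n\notin I_j(\zb)$ for all $j\neq k$. By the definition of the index sets in \cref{eqn:compositional_index_sets}, this means $\frac{\partial f_n}{\partial \zb_j}(\zb)={\bf 0}$ for every $j\neq k$. In other words, in the full Jacobian the only block of columns where row $n$ is permitted to be nonzero is the block corresponding to slot $k$.

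Next I would translate this row-wise statement into a bound on the rank of $\Jb\fb_S(\zb)$. The matrix $\Jb\fb_S(\zb)$ has $|S|$ rows and $KM$ columns, but by the previous step all of its nonzero entries are confined to the $M$ columns indexed by $\zb_k$; every other column is identically zero. Deleting these zero columns does not change the rank, so $\rank\left(\Jb\fb_S(\zb)\right)$ equals the rank of an $|S|\times M$ matrix, which is at most $\min(|S|,M)\leq M$. This yields the claimed inequality.

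I do not anticipate a genuine obstacle here: the result is a direct structural consequence of compositionality, and the only thing to be careful about is phrasing the column-support argument correctly (i.e. that the rank is bounded by the number of columns in the single active slot block, rather than by $|S|$). The interest of the lemma is not its difficulty but its role downstream, where this $M$-bound on sub-mechanism ranks is combined with irreducibility and the rank identity of \cref{def:independent_dependent_submechanism} to drive the latent-capacity counting argument behind \cref{thm:slot_identifiability}.
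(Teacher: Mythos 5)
Your proof is correct and follows essentially the same route as the paper's: both use compositionality (via the disjointness of the index sets $I_k(\zb)$ and $I_j(\zb)$) to conclude that all partials of $f_n$, $n\in S$, with respect to slots $j\neq k$ vanish, so that $\Jb\fb_S(\zb)$ has at most $M$ nonzero columns and hence rank at most $M$. Your write-up is, if anything, slightly more explicit than the paper's in spelling out the row-wise step from $n\in I_k(\zb)$ to $n\notin I_j(\zb)$, but the argument is identical in substance.
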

\begin{proof}\vspace{-1.8\topsep}
    Since $S \subseteq I_k(\zb)$, then by compositionality of $\fb$
    \begin{equation}\label{eqn:rank_upper_bound}
        \forall \zb\in \Z, s \in S, j \in [K] \setminus \{k\}: \quad  \frac{\partial f_s}{\partial \zb_j}(\zb) = {\bf 0}.
    \end{equation}
    Thus, $\Jb\fb_{S}(\zb)$ has at most $M$ non-zero columns (those corresponding to the non-zero partials w.r.t. $\zb_k$) which implies $\rank(\Jb\fb_S(\zb)) \leq M$.
\end{proof}

\vspace{\topsep}
We now show that the rank of each mechanism is equal to the latent slot-dimension $M$.
\begin{lemma} \label{lem:compositional_rank}
    Let $\fb: \Z\rightarrow \X$ be a diffeomorphism that is compositional (\cref{def:compositional}). Then $\forall \zb\in \Z, k\in [K]$: \[\rank(\Jb\fb_{I_k}(\zb)) = M.\]
\end{lemma}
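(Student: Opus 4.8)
The plan is to prove the equality by sandwiching the rank between two bounds: the upper bound $\rank(\Jb\fb_{I_k}(\zb))\le M$ is already supplied by \cref{lem:rank_upper_bound}, so the work is to establish the matching lower bound $\rank(\Jb\fb_{I_k}(\zb))\ge M$. My route to the lower bound is to exploit that $\fb$ is not merely differentiable but a \emph{diffeomorphism}: its differential is injective at every point, i.e.\ $\fb$ is an immersion, so the full Jacobian $\Jb\fb(\zb)$ is an $N\times KM$ matrix of full column rank $KM$ for every $\zb\in\Z$. In particular, its columns are linearly independent, and hence so are the $M$ columns indexed by slot $k$ (the partial derivatives $\partial\fb/\partial\zb_k$), so that this block of columns has rank exactly $M$.

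The key step is then to transfer this rank statement from the block of slot-$k$ columns (taken over all rows) to the mechanism $\Jb\fb_{I_k}(\zb)$. By the definition of $I_k(\zb)$ in \cref{eqn:compositional_index_sets}, every row $n\notin I_k(\zb)$ satisfies $\partial f_n/\partial\zb_k(\zb)=\mathbf{0}$, so the slot-$k$ columns are supported only on the rows in $I_k(\zb)$. Deleting the rows on which a fixed set of columns vanishes leaves the column space, and therefore the rank, unchanged; thus the slot-$k$ columns restricted to the rows $I_k(\zb)$ still have rank $M$. Finally, by compositionality (the same fact used inside \cref{lem:rank_upper_bound}), all columns of $\Jb\fb_{I_k}(\zb)$ belonging to slots $j\neq k$ vanish on these rows, so the nonzero content of $\Jb\fb_{I_k}(\zb)$ is precisely this rank-$M$ block. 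Hence $\rank(\Jb\fb_{I_k}(\zb))\ge M$, and combined with \cref{lem:rank_upper_bound} this yields $\rank(\Jb\fb_{I_k}(\zb))=M$.

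The main (indeed only nontrivial) obstacle is justifying that $\Jb\fb(\zb)$ has full column rank everywhere; this is exactly where the diffeomorphism hypothesis, rather than mere differentiability or compositionality, is indispensable, since it guarantees $\ker d\fb(\zb)=\{\mathbf{0}\}$ and rules out any pointwise rank degeneration on $\Z$. Once that is granted, the remainder is purely linear-algebraic: a linearly independent set of columns stays linearly independent after discarding rows on which those columns are identically zero.
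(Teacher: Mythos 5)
Your proof is correct, but it takes a genuinely different route from the paper's. The paper sandwiches the rank by partitioning the rows: it applies \cref{lem:independent_partition} to $I_k(\zb)$ and its complement $I^{C}_k$ to get $\rank(\Jb\fb_{I_k}(\zb)) + \rank(\Jb\fb_{I^{C}_k}(\zb)) = \rank(\Jb\fb(\zb)) = MK$, then combines the column-support bounds $\rank(\Jb\fb_{I_k}(\zb)) \le M$ and $\rank(\Jb\fb_{I^{C}_k}(\zb)) \le (K-1)M$ to force equality. You instead argue directly on the slot-$k$ column block $\partial\fb/\partial\zb_k(\zb)$: full column rank of $\Jb\fb(\zb)$ (the diffeomorphism hypothesis) gives this $N\times M$ block rank $M$; by the definition of $I_k(\zb)$ its rows outside $I_k(\zb)$ are identically zero, and deleting zero rows preserves rank; compositionality then says the remaining columns of $\Jb\fb_{I_k}(\zb)$ vanish, so the mechanism's rank equals that of the restricted block, namely $M$. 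Each step is sound (one wording nit: deleting rows does not literally leave the column space ``unchanged''---it lives in a smaller ambient space---but the rank is preserved because the projection is injective on the column space). Your approach is more elementary and self-contained: it never needs \cref{lem:independent_partition}, it yields the equality outright---so your appeal to \cref{lem:rank_upper_bound} for the upper bound is actually redundant---and it makes explicit why $I_k(\zb)\neq\emptyset$, an edge case both proofs otherwise gloss over. What the paper's version buys in exchange is thematic economy: it exercises the independence-of-sub-mechanisms machinery (\cref{lem:independent_partition}) that is central to the downstream arguments in \cref{prop:one_intersecting_object_indices}, keeping the whole proof chain organized around rank additivity across pixel partitions.
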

\begin{proof}\vspace{-1.8\topsep}
    First note $\fb$ is a diffeomorphism and is thus invertible. Therefore, $\Jb\fb$ must be invertible and thus have full column-rank, i.e., $\forall \zb\in \Z: \rank(\Jb\fb(\zb)) = MK$.
    
    Next, $\forall \zb\in \Z, k \in [K]$, let $I^{C}_k := [N] \setminus I_k$ denote the complement of $I_k$ in $[N]$ such that $I^{C}_k \cap I_k = \emptyset$. Thus, 
    by~\cref{lem:independent_partition}, the corresponding sub-mechanisms are independent:
    \begin{equation} \label{eqn:compositional_rank}
        \forall \zb\in \Z, k \in [K]:\quad  \rank(\Jb\fb(\zb)) = \rank(\Jb\fb_{I_k}(\zb)) + \rank(\Jb\fb_{I^{C}_k}(\zb)) = MK.
    \end{equation}
    By compositionality of $\fb$, 
    \begin{align}
        \forall \zb\in \Z, j \in [K] \setminus \{\,k\,\}: \quad  \frac{\partial \fb_{I_k}}{\partial \zb_j}(\zb) = {\bf 0}.
    \end{align}
    Thus, $\Jb\fb_{I_k}(\zb)$ has at most $M$ non-zero columns implying that $\rank(\Jb\fb_{I_k}(\zb)) \leq M$. 
    Furthermore, by definition,
    \begin{align}
        \forall \zb\in \Z :\quad  \frac{\partial \fb_{I^C_k}}{\partial \zb_k}(\zb) = {\bf 0},
    \end{align}
    which means that $\Jb\fb_{I^C_k}(\zb)$ has at most $(K - 1)M$ non-zero columns implying $\rank(\Jb\fb_{I^{C}_k}(\zb)) \leq (K - 1)M$. Inserting this result in \cref{eqn:compositional_rank} yields
    \begin{align}
        \forall \zb\in \Z, k \in [K]:\quad  M \leq MK - \rank(\Jb\fb_{I^{C}_k}(\zb))=\rank(\Jb\fb_{I_k}(\zb)) \leq M,
    \end{align}
    which can only be true if $\rank(\Jb\fb_{I_k}(\zb)) = M$.
\end{proof}
\vspace{\topsep}

Next, we show that for ground-truth generator $\fb$ and inferred generator $\hat\fb$, the sub-mechanisms at a given point with respect to the same pixel subset $S$ will be have the same rank. 
\begin{lemma} \label{lem:jacobian_rank_equality}
    Let $\fb, \hat\fb: \Z\rightarrow \X$ be diffeomorphisms with inverses $\gb, \hat\gb: \X\rightarrow \Z$, respectively. Then $\forall \zb\in \Z, S \subseteq [N]$ s.t. $S \neq \emptyset$, $\rank(\Jb\fb_{S}(\zb)) = \rank(\Jb\hat\fb_{S}(\hat \zb))$, where $\hat \zb := \hat\gb(\fb(\zb))$.
\end{lemma}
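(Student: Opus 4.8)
The plan is to exploit the composite diffeomorphism $\phi := \hat\gb \circ \fb : \Z \to \Z$, which by the definition $\hat\zb := \hat\gb(\fb(\zb))$ satisfies $\hat\zb = \phi(\zb)$. The guiding idea is that the full Jacobians of $\fb$ at $\zb$ and of $\hat\fb$ at $\hat\zb$ differ only by a right multiplication by the invertible matrix $\Jb\phi(\zb)$, and right multiplication by an invertible matrix preserves rank; restricting to the rows in $S$ then transfers this equality to the sub-mechanisms.

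First I would establish the factorization of the full Jacobians. Since $\hat\fb = \hat\gb^{-1}$, we have the functional identity $\hat\fb \circ \phi = \hat\fb \circ \hat\gb \circ \fb = \fb$. Applying the chain rule at $\zb$ and using $\phi(\zb) = \hat\zb$ gives
\begin{equation}
    \Jb\fb(\zb) = \Jb\hat\fb(\hat\zb)\, \Jb\phi(\zb).
\end{equation}
Next I would restrict both sides to the rows indexed by $S$. Because $\Jb\phi(\zb)$ acts on the latent (column) coordinates and does not depend on which output row is selected, taking the $S$-rows of the product on the right equals $\Jb\hat\fb_S(\hat\zb)\,\Jb\phi(\zb)$, so that
\begin{equation}
    \Jb\fb_S(\zb) = \Jb\hat\fb_S(\hat\zb)\, \Jb\phi(\zb).
\end{equation}

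Finally I would invoke invertibility: as $\phi$ is a composition of diffeomorphisms it is itself a diffeomorphism, hence $\Jb\phi(\zb)$ is a square $KM \times KM$ matrix of full rank. Since multiplying on the right by an invertible matrix leaves the rank unchanged, the previous display yields $\rank(\Jb\fb_S(\zb)) = \rank(\Jb\hat\fb_S(\hat\zb))$, which is the claim.

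I do not anticipate a serious obstacle here; this is essentially a chain-rule computation combined with the rank-invariance of a full-rank matrix factor. The only point needing mild care is the bookkeeping that selecting the rows in $S$ commutes with the right multiplication by $\Jb\phi(\zb)$, which holds precisely because that factor mixes columns (latent directions) rather than rows (pixels).
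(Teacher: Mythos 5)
Your proposal is correct and follows essentially the same route as the paper: both introduce the composite diffeomorphism $\hat\gb \circ \fb$, use the factorization $\Jb\fb_S(\zb) = \Jb\hat\fb_S(\hat\zb)\,\Jb(\hat\gb\circ\fb)(\zb)$, and conclude via rank invariance under right multiplication by an invertible matrix. The only cosmetic difference is that you restrict to the rows in $S$ after differentiating the full Jacobian identity, whereas the paper restricts the functional identity $\fb_S = \hat\fb_S \circ (\hat\gb\circ\fb)$ first and then applies the chain rule---these are equivalent.
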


\begin{proof}\vspace{-1.8\topsep}
    First, we introduce the function
            \begin{align*}
                \hb := \hat\gb \circ \fb: \Z\rightarrow \Z \quad \textrm{ s.t. } \quad \hat \zb := \hat\gb(\fb(\zb)) = \hb(\zb),
            \end{align*}
            We can express $\fb$ as $\fb = \hat\fb \circ \hat\gb \circ \fb = \hat\fb \circ \hb$. Thus, if $S \subseteq [N], S \neq \emptyset, \ \ \fb_{S} = \hat\fb_{S} \circ \hb$.
            Therefore,
            \begin{equation}\label{eqn:jacobian_rank_equality}
                \forall \zb \in \Z, \ \ \rank(\Jb\fb_{S}(\zb)) = \rank(\Jb\hat\fb_{S}(\hat \zb) \Jb\hb(\zb)).
            \end{equation}
            Because $\hb$ is a diffeomorphism, $\Jb\hb(\zb)$ is invertible. Thus $\rank({\bf A}\Jb\hb(\zb))=\rank({\bf A})$ for any matrix ${\bf A}$ s.t. ${\bf A}\Jb\hb(\zb)$ is defined~\citep[Section~0.4.6]{Horn2012MatrixA2}. Therefore, by \cref{eqn:jacobian_rank_equality}:
            \begin{align}
                \forall \zb \in \Z, \ \ \rank(\Jb\fb_{S}(\zb)) = \rank(\Jb\hat\fb_{S}(\hat \zb)).
            \end{align}
\end{proof}

We now prove several propositions which will be used to build our main result~(\cref{thm:slot_identifiability}). Firstly, we show that each inferred latent slot depends on at least one ground-truth slot.
\begin{proposition} \label{prop:block_identifiability_at_least_one_slot}
     Let $\Z$ be a latent space, $\X$ an observation space, and $\fb: \Z\rightarrow \X$ a diffeomorphism that is compositional (\cref{def:compositional}). Let $\hat\gb: \X\rightarrow \Z$ be a diffeomorphism and $\hat \zb := \hat\gb(\fb(\zb)), \forall \zb\in \Z$. Then, $\forall \zb\in \Z, i \in [K], \exists j \in [K] : \frac{\partial \hat \zb_j}{\partial \zb_i}(\zb) \neq {\bf 0}$.
     \end{proposition}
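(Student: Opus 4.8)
The plan is to argue by contradiction, leveraging only the fact that the composition $\hb := \hat\gb \circ \fb$ is a diffeomorphism and hence has an everywhere-invertible Jacobian. Since $\hat\zb = \hat\gb(\fb(\zb)) = \hb(\zb)$, the Jacobian $\Jb\hb(\zb)$ is a $KM \times KM$ matrix that partitions naturally into $M \times M$ blocks, where the block in block-row $j$ and block-column $i$ is exactly $\frac{\partial \hat\zb_j}{\partial \zb_i}(\zb)$. The strategy is to show that assuming an entire block-column vanishes forces $\Jb\hb(\zb)$ to be singular.

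First I would record that, as a composition of two diffeomorphisms, $\hb$ is itself a diffeomorphism (this mirrors the function $\hb$ already introduced in the proof of \cref{lem:jacobian_rank_equality}), so $\Jb\hb(\zb)$ is invertible and thus has full column rank $KM$ for every $\zb \in \Z$. Then I would suppose, for contradiction, that the claim fails: there exist some $\zb \in \Z$ and some $i \in [K]$ such that $\frac{\partial \hat\zb_j}{\partial \zb_i}(\zb) = {\bf 0}$ for all $j \in [K]$. Stacking these blocks vertically over $j$ shows that the entire block-column of $\Jb\hb(\zb)$ indexed by $\zb_i$---that is, the $M$ columns corresponding to the coordinates of $\zb_i$---is identically zero. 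A square matrix with $M \geq 1$ vanishing columns has rank at most $(K-1)M < KM$, so $\Jb\hb(\zb)$ cannot be invertible, contradicting the previous step. Hence for every $\zb$ and every $i$ at least one block $\frac{\partial \hat\zb_j}{\partial \zb_i}(\zb)$ must be nonzero.

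I do not expect any real obstacle here: compositionality of $\fb$ is not actually needed beyond fixing the problem setup, and the whole argument reduces to the elementary linear-algebra fact that a square matrix with a wholly vanishing column cannot be invertible. The only point requiring care is the bookkeeping of the block structure, so that ``all $j$'' corresponds precisely to one block-column of $\Jb\hb(\zb)$ and the resulting rank deficiency is correctly tallied as $M$.
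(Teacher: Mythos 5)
Your proof is correct and essentially identical to the paper's: both form $\hb = \hat\gb \circ \fb$, note it is a diffeomorphism with everywhere-invertible Jacobian, and conclude that no block-column of $\Jb\hb(\zb)$ can vanish (the paper states this directly via full rank implying all non-zero columns, while you phrase the same fact as a contradiction). Your side remark that compositionality of $\fb$ is never actually used is also accurate---the paper's proof does not invoke it either.
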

\begin{proof}\vspace{-1.8\topsep}
    We first define the function 
    \begin{align*}
        \hb := \hat\gb \circ  \fb: \Z\rightarrow \Z \quad \textrm{ s.t. } \quad \hat \zb := \hat \gb(\fb(\zb)) = \hb(\zb).
    \end{align*}
    As $\hat\gb$ and $\fb$ are both diffeomorphisms, $\hb$ is also a diffeomorphism.

    Note that $\forall \zb\in \Z, \Jb\hb(\zb)$ is a square matrix. Furthermore, because $\hb$ is a diffeomorphism, it follows that $\forall \zb\in \Z$, $\Jb\hb(\zb)$ is full rank. This implies $\Jb\hb(\zb)$ must have all non-zero columns, which implies  
    \begin{align*}
    \forall \zb\in \Z, i \in [K], \exists j \in [K] : \frac{\partial \hat \zb_j}{\partial \zb_i}(\zb) \neq {\bf 0}.
    \end{align*}
\end{proof}
Next, we show that each inferred latent slot generates the same pixels as at most one ground-truth slot.
\begin{proposition} \label{prop:one_intersecting_object_indices}
    Let $\Z$ be a latent space and $\X$ an observation space defined as in \cref{sec:model}. Let $\fb: \Z\rightarrow \X$ be a diffeomorphism that is compositional (\cref{def:compositional}) with irreducible mechanisms (\cref{def:irreducible_mechanism}). Let $\hat\gb: \X\rightarrow \Z$ be a diffeomorphism with inverse $\hat\fb: \Z\rightarrow \X$ that is compositional (\cref{def:compositional}). Then $\forall \zb\in \Z, j\in [K]$, there exists exactly one $i\in [K] : \hat{I_j}(\hat\zb) \cap {I_i}(\zb) \neq \emptyset$, where $\hat\zb:=\hat\gb(\fb(\zb))$
\end{proposition}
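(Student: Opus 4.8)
The plan is to reduce the claim to a counting argument about a $K\times K$ matrix of sub-mechanism ranks. For each pair $(j,i)$ set $S_{ji} := \hat{I_j}(\hat\zb)\cap I_i(\zb)$ and define $r_{ji} := \rank(\Jb\fb_{S_{ji}}(\zb))$ when $S_{ji}\neq\emptyset$ and $r_{ji}:=0$ otherwise. The first observation is that, by \cref{lem:jacobian_rank_equality}, each such rank is the same whether computed through $\fb$ at $\zb$ or through $\hat\fb$ at $\hat\zb$, i.e.\ $r_{ji}=\rank(\Jb\hat\fb_{S_{ji}}(\hat\zb))$; this is what will let me freely switch between the two decompositions. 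Applying \cref{lem:jacobian_rank_equality} to singletons (a single pixel contributes a rank-one block iff its Jacobian row is nonzero) also shows $\bigcup_i I_i(\zb)=\bigcup_j \hat{I_j}(\hat\zb)=:V$, so each $\hat{I_j}(\hat\zb)$ is partitioned by the ground-truth slots as $\hat{I_j}(\hat\zb)=\bigsqcup_i S_{ji}$ and, symmetrically, $I_i(\zb)=\bigsqcup_j S_{ji}$. Since $S_{ji}\neq\emptyset \iff r_{ji}>0$, the proposition is exactly the statement that each row of $(r_{ji})$ has a single nonzero entry.

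Next I would show that $(r_{ji})$ has all row- and column-sums equal to $M$. For a fixed inferred slot $j$, every pixel $n\in S_{ji}$ lies in $I_i(\zb)$, so by compositionality of $\fb$ (\cref{def:compositional}) its row of $\Jb\fb$ is supported only on the columns of slot $i$; hence $\Jb\fb_{\hat{I_j}}(\zb)$ is block diagonal across the ground-truth slots and its rank equals $\sum_i r_{ji}$ (the same block reasoning as in \cref{lem:independent_partition}, now with several blocks). Combining with \cref{lem:jacobian_rank_equality} and \cref{lem:compositional_rank} applied to $\hat\fb$ gives $\sum_i r_{ji}=\rank(\Jb\hat\fb_{\hat{I_j}}(\hat\zb))=M$. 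The symmetric computation, now using compositionality of $\hat\fb$ to make $\Jb\hat\fb_{I_i}(\hat\zb)$ block diagonal across the inferred slots, gives $\sum_j r_{ji}=\rank(\Jb\fb_{I_i}(\zb))=M$. Note that these doubly-balanced constraints alone are not enough to force a permutation structure (e.g.\ an all-ones pattern scaled by $M/2$ satisfies them), so irreducibility must enter to rule out off-diagonal mass.

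Finally I would invoke irreducibility of $\fb$ to concentrate each column of $(r_{ji})$ on a single entry; the row-sum constraint then transfers this to the rows, which is the desired statement. Fix $i$ and suppose, for contradiction, that two of the sets $S_{ji}$ are nonempty. Pick one index $j_0$ and split $I_i(\zb)=S_1\sqcup S_2$ with $S_1:=S_{j_0 i}$ and $S_2:=\bigcup_{j\neq j_0}S_{ji}$, both nonempty. Irreducibility (\cref{def:irreducible_mechanism}) yields the strict inequality $M=\rank(\Jb\fb_{I_i}(\zb))<\rank(\Jb\fb_{S_1}(\zb))+\rank(\Jb\fb_{S_2}(\zb))$. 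The crucial point---and what I expect to be the main obstacle---is to evaluate this right-hand side \emph{exactly} rather than merely bound it: although $S_1$ and $S_2$ live in the same ground-truth slot (so their $\fb$-sub-mechanisms need not be independent, and irreducibility only gives an inequality), computing their ranks through $\hat\fb$ renders them block diagonal across distinct inferred slots, so by \cref{lem:jacobian_rank_equality} the right-hand side equals $\sum_j r_{ji}=M$. This gives $M<M$, a contradiction. Hence each column carries a single nonzero entry, necessarily equal to $M$; then $\sum_i r_{ji}=M$ with entries in $\{0,M\}$ forces exactly one nonzero entry per row, i.e.\ each $\hat{I_j}(\hat\zb)$ meets exactly one $I_i(\zb)$.
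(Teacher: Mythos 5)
Your proof is correct, and it takes a genuinely different route from the paper's. The paper argues locally around a single inferred slot: supposing $\hat{I_j}(\hat\zb)$ meets $I_i(\zb)$ for several $i$, it partitions each $I_i(\zb)$ into $O_{i,1}=I_i(\zb)\cap \hat{I_j}(\hat\zb)$ and $O_{i,2}=I_i(\zb)\setminus \hat{I_j}(\hat\zb)$, and the crux of its Step~2.1 is establishing $O_{i,2}\neq\emptyset$ via an auxiliary rank bound on the set $A=I_i(\zb)\cup O_{k,1}$ for a second slot $k$; only then can irreducibility be applied in Step~2.2. Your global bookkeeping with the rank matrix $(r_{ji})$ sidesteps that nonemptiness step entirely: since you apply irreducibility (\cref{def:irreducible_mechanism}) to a fixed \emph{ground-truth} slot $i$ partitioned by the \emph{inferred} slots, both cells $S_{j_0 i}$ and $\bigcup_{j\neq j_0}S_{ji}$ are nonempty by hypothesis the moment two intersections are nonempty, and the exact evaluation of the right-hand side through $\hat\fb$ (where the pieces become block-disjoint) is precisely the trick the paper also exploits, just organized differently. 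Your balance equations $\sum_i r_{ji}=\sum_j r_{ji}=M$ rest on the same ingredients as the paper---\cref{lem:independent_partition,lem:compositional_rank,lem:jacobian_rank_equality}, with the multi-block extension of \cref{lem:independent_partition} correctly justified by disjoint column supports, and with the needed identity $\bigcup_i I_i(\zb)=\bigcup_j \hat{I_j}(\hat\zb)$ rightly extracted from \cref{lem:jacobian_rank_equality} on singletons. What your route buys: the ``at least one'' direction (the paper's Step~1) comes for free from the row sum $M>0$; the transfer from columns to rows via entries in $\{0,M\}$ replaces the paper's separate combinatorial derivation of \cref{cor:rev_prop_2}, so you obtain \cref{prop:one_intersecting_object_indices} and \cref{cor:rev_prop_2} simultaneously, and additionally that each nonzero intersection carries full rank $M$. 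What the paper's route buys is modularity---each direction is a self-contained contradiction---whereas yours requires the up-front matrix setup; your side remark that the doubly-balanced constraints alone do not force a permutation pattern, so that irreducibility is genuinely needed, is also well taken.
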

\begin{proof}\vspace{-1.8\topsep}
    Our goal is to show that $\hat\fb$ maps each inferred latent slot $\hat\zb_{j}$ to pixels generated by exactly one ground-truth latent slot $\zb_{i}$.
    \paragraph{Step 1}
    We will first show that $\hat\fb$ maps each inferred latent slot $\hat\zb_{j}$ to pixels generated by at least one ground-truth latent slot $\zb_{i}$. More precisely, we aim to show:
    \begin{equation}\label{eqn:prop_2_step_1_base}
        \forall \zb\in \Z, j \in [K], \exists i \in [K]:\hat{I_j}(\hat\zb) \cap {I_i}(\zb) \neq \emptyset.
    \end{equation}  
    Suppose for a contradiction to \cref{eqn:prop_2_step_1_base} that:
    \begin{equation}\label{eqn:prop_2_step_1_cont}
        \exists \zb^*\in \Z, j \in [K], \nexists i \in [K]:\hat{I_j}(\hat\zb^*) \cap {I_i}(\zb^*) \neq \emptyset.
    \end{equation}
    We will show that this assumption leads to a contradiction and, hence, is false. Let, $\zb^*$ denote the value for which \cref{eqn:prop_2_step_1_cont} holds. \cref{eqn:prop_2_step_1_cont} coupled with the definition of ${I_i}(\zb^*)$ in \cref{eqn:compositional_index_sets} imply that there exists pixels which depend on $\hat\zb^*$ under $\hat\fb$ but not on $\zb^*$ under $\bf\f$. More precisely,
    \begin{align}
        \exists i \in \hat{I_j}(\hat\zb^*) : \Jb\hat\fb_i(\hat\zb^*)\neq {\bf 0}, \ \ \ \nexists i \in \hat{I_j}(\hat\zb^*) : \Jb\fb_i(\zb^*)\neq {\bf 0}
    \end{align}
    This then implies that:
    \begin{align}
        \rank(\Jb\hat\fb_{\hat{I_j}}(\hat\zb^*)) \neq {\bf 0}, \ \ \ \rank(\Jb\fb_{\hat{I_j}}(\zb^*)) = {\bf 0}
    \end{align}
    which contradicts the equality of Jacobian ranks between $\fb$ and $\hat\fb$ stated in \cref{lem:jacobian_rank_equality}. Thus, our assumed contradiction in \cref{eqn:prop_2_step_1_cont} cannot hold and we conclude that \cref{eqn:prop_2_step_1_base} must hold true.

    \paragraph{Step 2} 
    We will now show that $\hat\fb$ maps each inferred latent slot $\hat\zb_{j}$ to pixels generated by at most one ground-truth latent slot $\zb_{i}$. More precisely, for $C:=\{\, P \subseteq [K] : |P| > 1 \,\}$ we aim to show:
    \begin{equation}\label{eqn:prop_2_step_2_base}
        \forall \zb\in \Z, j \in [K], \nexists P \in C :\qquad   i \in P \implies \hat{I_j}(\hat\zb) \cap {I_i}(\zb) \neq \emptyset.
    \end{equation}
    Suppose for a contradiction to \cref{eqn:prop_2_step_2_base} that:
    \begin{equation}\label{eqn:prop_2_step_2_cont}
        \exists \zb^* \in \Z, j \in [K], P \in C : \qquad  i \in P \implies \hat{I_j}(\hat\zb^*) \cap {I_i}(\zb^*)\neq \emptyset.
    \end{equation}
    We will let $\zb^*$ denote the value for which \cref{eqn:prop_2_step_2_cont} holds and without loss of generality let $j=1$.
    \paragraph{Step 2.1} 
    First, $\forall i \in P$ we define the sets:
    \begin{equation}\label{eqn:pixel_sets_def}
        O_{i,1} := \{\, j \in {I_i}(\zb^*) \mid j \in \hat {I_1}(\hat\zb^*) \,\}, \ \ O_{i,2} := \{\, j \in {I_i}(\zb^*) \mid j \notin \hat {I_1}(\hat\zb^*) \,\},
    \end{equation}
    Intuitively, the set $O_{i,1}$ represents the pixels which are a function of both ground-truth latent slot $\zb^*_{i}$ and inferred slot $\hat\zb^*_{1}$, while $O_{i,2}$ represents the pixels which are a function of $\zb^*_{i}$ but not $\hat\zb^*_{1}$. Our aim is now to show that for all $\forall i \in P$, the sets $O_{i,1}, O_{i,2}$ form a partition of ${I_i}(\zb^*)$.
    
    By \cref{eqn:pixel_sets_def}, $\forall i \in P$,  $O_{i,1}\cup O_{i,2} = {I_i}(\zb^*)$, and $O_{i,1}\cap O_{i,2} =\emptyset$. We thus only need to show that $O_{i,1},O_{i,2} \neq \emptyset$.

    We first note that by our assumed contradiction in \cref{eqn:prop_2_step_2_cont}, there are pixels which are a function of both ground-truth slot $\zb^*_{i}$ and inferred slot $\hat\zb^*_{1}$ i.e.:
    \begin{align}
        \forall i \in P, \exists j \in {I_i}(\zb^*) : j \in \hat {I_1}(\hat\zb^*) \implies j \in O_{i,1} \implies O_{i,1} \neq \emptyset.
    \end{align}    
    We will now show that $\forall i \in P,\ O_{i,2} \neq \emptyset$. Suppose for a contradiction that 
    \begin{equation} \label{eqn:prop_2_step_2.1_cont}
        \exists i \in P : O_{i,2} = \emptyset,
    \end{equation}    
    This implies that ${I_i}(\zb^*) = O_{i,1}$ as $ {I_i}(\zb^*) = O_{i,1}\cup O_{i,2} = O_{i,1}\cup \emptyset$. Further, \cref{eqn:pixel_sets_def} implies that $O_{i, 1} \subseteq \hat {I_1}(\hat\zb^*)$ thus $O_{i, 1}={I_i}(\zb^*) \subseteq \hat {I_1}(\hat\zb^*)$.
    
    Next, consider another ground-truth slot $\zb^*_{k}$ where $k \neq i \in P$. As previously established, $O_{k,1}\neq \emptyset$. Moreover, by \cref{eqn:pixel_sets_def}, $O_{k,1} \subseteq \hat {I_1}(\hat\zb^*)$. Thus, $A := {I_i}(\zb^*) \cup O_{k,1} \subseteq \hat {I_1}(\hat\zb^*)$. 
    Now, note that because $\hat\fb$ is compositional, \cref{lem:rank_upper_bound} implies that the rank of the sub-mechanism defined by $A \le M$. When coupled with the equality of Jacobian ranks between $\fb$ and $\hat\fb$ stated in \cref{lem:jacobian_rank_equality}, we get:
    \begin{equation}\label{eqn:prop_2_step_2.1_rank_ineq}
        \rank(\Jb\fb_{A}(\zb^*)) = \rank(\Jb\hat\fb_{A}(\hat \zb^*)) \le M.
    \end{equation}
    Moreover, according to \cref{eqn:pixel_sets_def}, $O_{k,1} \subseteq {I_k}(\zb^*)$. By compositionality of $\fb$, it thus follows that $O_{k,1} \cap {I_i}(\zb^*) = \emptyset$ since $i \neq k$. Therefore, by \cref{lem:independent_partition}, we know the sub-mechanisms defined by ${I_i}(\zb^*)$ and $O_{k,1}$ are independent such that
    \begin{align}
        \rank(\Jb\fb_{A}(\zb^*)) = \rank(\Jb\fb_{I_i}(\zb^*)) + \rank(\Jb\fb_{O_{k,1}}(\zb^*)).
    \end{align}
    Leveraging \cref{lem:compositional_rank} yields $\rank(\Jb\fb_{I_i}(\zb^*)) = M$. Inserting this in the previous equation yields
    \begin{align}
        \rank(\Jb\fb_{A}(\zb^*)) = M + \rank(\Jb\fb_{O_{k,1}}(\zb^*)),
    \end{align}
    which according to \cref{eqn:prop_2_step_2.1_rank_ineq} must be $\leq M$ i.e.
    \begin{align} \label{eqn:prop_2_step_2.1_rank_eq_ineq}
        M \geq \rank(\Jb\fb_{A}(\zb^*)) = M + \rank(\Jb\fb_{O_{k,1}}(\zb^*)).
    \end{align}
    Now, note that by the definition of ${I_k}(\zb^*)$ in \cref{eqn:compositional_index_sets}, $\forall i \in {I_k}(\zb^*)$, $\Jb\fb_i(\zb^*)\neq {\bf 0}$. Because $O_{k,1} \neq \emptyset$ and $O_{k,1} \subseteq {I_k}(\zb^*)$, it follows that $\Jb\fb_{O_{k,1}}(\zb^*) \neq {\bf 0}$. This implies $\rank(\Jb\fb_{O_{k,1}}(\zb^*)) > 0$. However, this contradicts \cref{eqn:prop_2_step_2.1_rank_eq_ineq} and, hence, also the initial assumption in \cref{eqn:prop_2_step_2.1_cont}. Therefore, we conclude that $\forall i \in P,\ O_{i,2} \neq \emptyset$.
    
    Taken together, we have shown that $\forall i \in P$, the sets $O_{i,1}, O_{i,2}$ are nonempty and form a partition of ${I_i}(\zb^*)$.
    \paragraph{Step 2.2} 
    Next, we first note that \cref{lem:compositional_rank} implies that the rank of the mechanism $\Jb\fb_{I_i}(\zb^*)$ is equal to $M$. Moreover, by assumption, $\Jb\fb_{I_i}(\zb^*)$ is irreducible. Because $O_{i,1}$ and $O_{i,2}$ form a partition of ${I_i}(\zb^*)$, irreducibility then implies:
    \begin{equation}\label{eqn:prop_2_step_2.2_rank_ineq_f}
        \forall i \in P : \rank(\Jb\fb_{O_{i,1}}(\zb^*)) + \rank(\Jb\fb_{O_{i,2}}(\zb^*)) > M.
    \end{equation}
     Due to the equality of Jacobian ranks between $\fb$ and $\hat\fb$ stated in \cref{lem:jacobian_rank_equality}, \cref{eqn:prop_2_step_2.2_rank_ineq_f} implies
    \begin{equation}\label{eqn:prop_2_step_2.2_rank_ineq_f_hat}
        \forall i \in P : \rank(\Jb\hat\fb_{O_{i,1}}(\hat\zb^*)) + \rank(\Jb\hat\fb_{O_{i,2}}(\hat\zb^*)) > M.
    \end{equation}
    By the definition of $O_{i,1},O_{i,2}$ in \cref{eqn:pixel_sets_def}, $\forall i \in P : O_{i,1} \subseteq \hat {I_1}(\hat\zb^*), \ O_{i,2} \cap \hat {I_1}(\hat\zb^*) = \emptyset$.
    It thus follows from \cref{lem:independent_partition} that the sub-mechanisms defined by $O_{i,1}$ and $O_{i,2}$ are independent under $\hat\fb$ in the sense of \cref{def:independent_dependent_submechanism}. Because $O_{i,1}$ and $O_{i,2}$ form a partition of ${I_i}(\zb^*)$, this independence, when coupled with \cref{eqn:prop_2_step_2.2_rank_ineq_f_hat}, implies:
    \begin{equation}\label{eqn:prop_2_step_2.2_rank_eq_ineq_f_hat}
        \forall i \in P : \rank(\Jb\hat\fb_{I_i}(\hat\zb^*)) =  \rank(\Jb\hat\fb_{O_{i,1}}(\hat\zb^*)) + \rank(\Jb\hat\fb_{O_{i,2}}(\hat\zb^*)) > M. 
    \end{equation}
    We know from \cref{lem:compositional_rank} that the mechanism defined by ${I_i}(\zb^*)$ has rank $M$ under $\fb$. The equality of Jacobian ranks between $\fb$ and $\hat\fb$ stated in \cref{lem:jacobian_rank_equality} then implies:
    \begin{align}
        \rank(\Jb\hat\fb_{I_i}(\hat\zb^*)) = \rank(\Jb\fb_{I_i}(\zb^*)) = M,
    \end{align}
    which contradicts \cref{eqn:prop_2_step_2.2_rank_eq_ineq_f_hat}, and, hence the initial assumption of this proof by contradiction in \cref{eqn:prop_2_step_2_cont} cannot be correct and \cref{eqn:prop_2_step_2_base} must hold true. 
    
    We have now shown that $\forall \zb\in \Z, j\in [K]$, there exists at least one and at most one $i\in [K] : \hat{I_j}(\hat\zb) \cap {I_i}(\zb) \neq \emptyset$ implying there exists exactly one, thus completing the proof.
\end{proof}
We now provide a corollary to \cref{prop:one_intersecting_object_indices} stating that the result also holds when the roles of $\hat{I_j}(\hat\zb), {I_i}(\zb)$ are reversed.
\vspace{\topsep}
\begin{corollary} \label{cor:rev_prop_2}
   $\forall \zb\in \Z, i\in [K]$, there exists exactly one $j\in [K] : \hat{I_j}(\hat\zb) \cap {I_i}(\zb) \neq \emptyset$.
\end{corollary}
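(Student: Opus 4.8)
The plan is to reduce the corollary to a counting argument over the finite index set $[K]$, taking \cref{prop:one_intersecting_object_indices} as the main input. Fix $\zb\in\Z$ and set $\hat\zb := \hat\gb(\fb(\zb))$. By \cref{prop:one_intersecting_object_indices}, for each $j\in[K]$ there is a \emph{unique} $i\in[K]$ with $\hat{I_j}(\hat\zb)\cap I_i(\zb)\neq\emptyset$; this defines a map $\sigma:[K]\to[K]$ by $\sigma(j):=i$, and the relation $\hat{I_j}(\hat\zb)\cap I_i(\zb)\neq\emptyset$ holds if and only if $i=\sigma(j)$. The corollary asserts that for each $i$ there is exactly one such $j$, which is precisely the statement that $\sigma$ is a bijection. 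Since $\sigma$ maps a finite set to itself, it suffices to prove that $\sigma$ is surjective.

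For surjectivity I would argue by contradiction. Suppose some $i_0\in[K]$ lies outside the image of $\sigma$. Then $\sigma(j)\neq i_0$ for every $j$, so by the equivalence above $\hat{I_j}(\hat\zb)\cap I_{i_0}(\zb)=\emptyset$ for all $j\in[K]$. By \cref{lem:compositional_rank}, $\rank(\Jb\fb_{I_{i_0}}(\zb))=M\geq 1$, so $I_{i_0}(\zb)\neq\emptyset$; pick any pixel $n\in I_{i_0}(\zb)$. Emptiness of every intersection gives $n\notin\hat{I_j}(\hat\zb)$ for all $j$, i.e. $\frac{\partial \hat f_n}{\partial \hat\zb_j}(\hat\zb)={\bf 0}$ for every slot $j$. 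Since the slots exhaust all latent coordinates, the entire $n$\textsuperscript{th} row of $\Jb\hat\fb(\hat\zb)$ then vanishes. But $\hat\fb$ is a diffeomorphism, so $\Jb\hat\fb(\hat\zb)$ is invertible and cannot contain a zero row, a contradiction. Hence $\sigma$ is surjective.

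A surjection between finite sets of equal cardinality is automatically a bijection, so $\sigma$ is also injective; equivalently, each fiber $\sigma^{-1}(i)$ is a singleton. Translating back through the equivalence, for every $i\in[K]$ there is exactly one $j\in[K]$ with $\hat{I_j}(\hat\zb)\cap I_i(\zb)\neq\emptyset$, which is the claim.

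I expect the only genuinely subtle point, and the reason one cannot simply invoke ``symmetry'' with \cref{prop:one_intersecting_object_indices}, to be the asymmetry of the hypotheses in $\fb$ and $\hat\fb$: although $\hat\fb$ is assumed compositional, it is \emph{not} assumed irreducible, so \cref{prop:one_intersecting_object_indices} cannot be re-applied with the roles of $\fb$ and $\hat\fb$ interchanged. The counting argument above sidesteps this obstacle entirely, extracting bijectivity from nothing more than the well-definedness of $\sigma$ (\cref{prop:one_intersecting_object_indices}) together with invertibility of $\hat\fb$, and never appealing to irreducibility of $\hat\fb$.
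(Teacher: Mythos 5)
Your reduction of the corollary to bijectivity of the assignment $\sigma$ is sound, and it is essentially the same skeleton as the paper's own proof: the paper's ``at most one'' step (the explicit construction of the sets $A$, $P$, $B$) is exactly your pigeonhole observation that a map $[K]\to[K]$ with a fiber of size greater than one must miss some $i$, combined with the ``at least one'' half. The genuine gap is in your surjectivity argument, at the words ``$\Jb\hat\fb(\hat\zb)$ is invertible and cannot contain a zero row.'' The Jacobian of $\hat\fb$ is not a square matrix in this setting: $\hat\fb$ maps $\Z=\mathbb{R}^{KM}$ diffeomorphically onto the observation space $\X$, which is a $KM$-dimensional submanifold of $\mathbb{R}^{N}$ with $N>KM$ in general---the proof of \cref{lem:compositional_rank} speaks only of \emph{full column rank}, and indeed irreducibility of $\fb$ forces $|I_k(\zb)|>M$ for every slot (a mechanism with exactly $M$ rows of rank $M$ splits into independent sub-mechanisms under any partition; cf.\ the discussion in \cref{subsec:exp1_details}), so $N\geq\sum_k |I_k(\zb)|>KM$. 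A tall matrix of full column rank can perfectly well have zero rows: $(z_1,z_2)\mapsto(z_1,z_2,z_1^3)$ is a diffeomorphism onto its image whose Jacobian at $z_1=0$ has a vanishing third row. So the zero row of $\Jb\hat\fb(\hat\zb)$ at pixel $n$ is not by itself a contradiction with $\hat\fb$ being a diffeomorphism.

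The repair is immediate and uses a lemma you already have. Since $n\in I_{i_0}(\zb)$, the row $\Jb\fb_{\{n\}}(\zb)\neq{\bf 0}$, so $\rank(\Jb\fb_{\{n\}}(\zb))=1$, whereas your assumption gives $\rank(\Jb\hat\fb_{\{n\}}(\hat\zb))=0$; this contradicts \cref{lem:jacobian_rank_equality} applied with $S=\{n\}$. Equivalently, by the chain rule $\fb_n=\hat\fb_n\circ\hb$ with $\hb=\hat\gb\circ\fb$ a diffeomorphism, a vanishing row of $\Jb\hat\fb(\hat\zb)$ would force the corresponding row of $\Jb\fb(\zb)$ to vanish. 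This rank-transfer step is precisely what the paper means by ``repeating Step 1 of \cref{prop:one_intersecting_object_indices}'' in its proof of the at-least-one half of the corollary. With that single substitution your proof is correct, and your $\sigma$-bijection framing is a cleaner packaging of the paper's counting construction; your closing remark about why \cref{prop:one_intersecting_object_indices} cannot simply be re-applied with $\fb$ and $\hat\fb$ interchanged (no irreducibility assumed for $\hat\fb$) is also exactly right.
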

\begin{proof}
We will first prove that there exists at least one $j\in [K] : \hat{I_j}(\hat\zb) \cap {I_i}(\zb) \neq \emptyset$. Assume, for a contradiction that:
    \begin{equation}\label{eqn:cor_1_step_1_cont}
        \exists \zb^*\in \Z, i \in [K], \nexists j \in [K]:\hat{I_j}(\hat\zb^*) \cap {I_i}(\zb^*) \neq \emptyset.
    \end{equation}
    This contradiction can be shown not to hold by exactly repeating the procedure in {\bf Step 1} of \cref{prop:one_intersecting_object_indices}.
        
We thus only need to prove that there exists at most one $j\in [K] : \hat{I_j}(\hat\zb) \cap {I_i}(\zb) \neq \emptyset$. Let $C:=\{\, P \subseteq [K] : |P| > 1 \,\}$. Suppose for a contradiction that:
    \begin{equation}\label{eqn:cor_1_step_2_cont}
        \exists \zb^* \in \Z, i \in [K], P \in C : \qquad  j \in P \implies \hat{I_j}(\hat\zb^*) \cap {I_i}(\zb^*)\neq \emptyset.
    \end{equation}

Let $A := [K] \setminus P$. We know by \cref{prop:one_intersecting_object_indices} that $\forall j \in A$, there exists exactly one $i \in [K] : \hat{I_j}(\hat\zb^*) \cap {I_i}(\zb^*)\neq \emptyset$. This implies that at least $|[K]| -|A| = |P|$ ground-truth latent slots generate pixels which do not overlap with the pixels generated by any inferred latent slots in $A$. In other words, there exists a set $B \subset [K]$ with cardinality $\geq |P| > 1$ s.t.
    \begin{equation}\label{eqn:cor_1_step_2_non_int_set_1}
        \forall i \in B, \forall j \in A : \hat{I_j}(\hat\zb^*) \cap {I_i}(\zb^*) = \emptyset
    \end{equation}
Now consider the set $P$. We know by \cref{eqn:cor_1_step_2_cont}, that for all $j \in P : \hat{I_j}(\hat\zb^*) \cap {I_i}(\zb^*)\neq \emptyset$. By \cref{prop:one_intersecting_object_indices}, we know that for all $j \in P$, $\hat{I_j}(\hat\zb^*)$ can intersect only with ${I_i}(\zb^*)$. Given that $|B| > 1$, this then implies
    \begin{equation}\label{eqn:cor_1_step_2_non_int_set_2}
        \exists i \in B : \forall j \in P : \hat{I_j}(\hat\zb^*) \cap {I_i}(\zb^*) = \emptyset
    \end{equation}
Now, by construction, $[K] = A \cup P$. Thus, \cref{eqn:cor_1_step_2_non_int_set_1} and \cref{eqn:cor_1_step_2_non_int_set_2} together imply:
    \begin{equation}\label{eqn:cor_1_step_2_non_int_set_union}
        \exists i \in B \subset [K] : \forall j \in [K] : \hat{I_j}(\hat\zb^*) \cap {I_i}(\zb^*) = \emptyset
    \end{equation}

We have already shown in the first part of this corollary, however, that \cref{eqn:cor_1_step_2_non_int_set_union} cannot be true by repeating the procedure in {\bf Step 1} of \cref{prop:one_intersecting_object_indices}. Thus, our assumed contradiction in \cref{eqn:cor_1_step_2_cont} cannot be true.

We have now shown that $\forall \zb\in \Z, i\in [K]$, there exists at least one and at most one $j\in [K] : \hat{I_j}(\hat\zb) \cap {I_i}(\zb) \neq \emptyset$ implying there exists exactly one, thus completing the proof.
\end{proof}
We now build upon~\cref{prop:one_intersecting_object_indices} and \cref{cor:rev_prop_2}, to show that all inferred latent slots depend on at most one ground-truth slot.
\begin{proposition} \label{prop:at_most_one_object}
     Let $\Z$ be a latent space and $\X$ an observation space. Let $\fb: \Z\rightarrow \X$ be a diffeomorphism that is compositional (\cref{def:compositional}) with irreducible mechanisms (\cref{def:irreducible_mechanism}). Let $\hat\gb: \X\rightarrow \Z$ be a diffeomorphism with inverse $\hat\fb: \Z\rightarrow \X$ that is compositional (\cref{def:compositional}). Let $\hat \zb := \hat\gb(\fb(\zb)), \forall \zb\in \Z$. Then, $\forall \zb\in \Z, i \in [K],$ there exists at most one $j \in [K] : \frac{\partial \hat \zb_j}{\partial \zb_i}(\zb) \neq {\bf 0}$.
\end{proposition}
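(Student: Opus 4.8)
The plan is to reduce the statement to \cref{cor:rev_prop_2} by relating the latent-level dependence $\frac{\partial \hat\zb_j}{\partial \zb_i}$ to the pixel index sets. Define $J_i := \{\, j\in[K] : \frac{\partial \hat\zb_j}{\partial \zb_i}(\zb) \neq {\bf 0} \,\}$; our goal is $|J_i|\le 1$. Since \cref{cor:rev_prop_2} guarantees that exactly one $j\in[K]$ satisfies $\hat{I_j}(\hat\zb)\cap I_i(\zb)\neq\emptyset$, it suffices to prove the implication
\[
\frac{\partial \hat\zb_j}{\partial \zb_i}(\zb)\neq{\bf 0} \implies \hat{I_j}(\hat\zb)\cap I_i(\zb)\neq\emptyset ,
\]
as this shows $J_i\subseteq\{\,j: \hat{I_j}(\hat\zb)\cap I_i(\zb)\neq\emptyset\,\}$ and hence $|J_i|\le 1$.

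I would prove this implication by contrapositive, using the chain rule together with the full column rank of the inferred mechanisms. As in the earlier lemmata, set $\hb:=\hat\gb\circ\fb$, so that $\fb = \hat\fb\circ\hb$ and $\hat\zb = \hb(\zb)$. Differentiating $\fb = \hat\fb\circ\hb$ and using compositionality of $\hat\fb$ (every pixel $n\in\hat{I_j}(\hat\zb)$ depends only on inferred slot $j$, so all other terms in the chain rule vanish) yields, after stacking over the rows $n\in\hat{I_j}(\hat\zb)$,
\[
\frac{\partial \fb_{\hat{I_j}}}{\partial \zb_i}(\zb) = \frac{\partial \hat\fb_{\hat{I_j}}}{\partial \hat\zb_j}(\hat\zb)\;\frac{\partial \hat\zb_j}{\partial \zb_i}(\zb).
\]
Here the left factor $B:=\frac{\partial \hat\fb_{\hat{I_j}}}{\partial \hat\zb_j}(\hat\zb)$ is the slot-$j$ block of the mechanism $\Jb\hat\fb_{\hat{I_j}}(\hat\zb)$, an $|\hat{I_j}(\hat\zb)|\times M$ matrix.

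Now suppose, for the contrapositive, that $\hat{I_j}(\hat\zb)\cap I_i(\zb)=\emptyset$. Then $\frac{\partial f_n}{\partial \zb_i}(\zb)={\bf 0}$ for every $n\in\hat{I_j}(\hat\zb)$ by the definition of $I_i(\zb)$, so the displayed identity becomes $B\,\frac{\partial \hat\zb_j}{\partial \zb_i}(\zb)={\bf 0}$. Because $\hat\fb$ is compositional, \cref{lem:compositional_rank} gives $\rank(\Jb\hat\fb_{\hat{I_j}}(\hat\zb))=M$, and compositionality forces all nonzero columns of this mechanism into slot $j$; hence $B$ already has rank $M$, i.e. full column rank. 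A matrix with full column rank has trivial kernel, so $B\,\frac{\partial \hat\zb_j}{\partial \zb_i}(\zb)={\bf 0}$ forces $\frac{\partial \hat\zb_j}{\partial \zb_i}(\zb)={\bf 0}$. This is exactly the contrapositive of the desired implication, and combining it with \cref{cor:rev_prop_2} completes the argument.

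I expect the main obstacle to be precisely this bridge between the two levels of description: the index sets $\hat{I_j}$ and $I_i$ are defined through decoder Jacobians, whereas $\frac{\partial \hat\zb_j}{\partial \zb_i}$ is a latent-to-latent quantity, and trying to relate them directly through the encoder $\hat\gb$ fails because inverting $\hat\fb$ to obtain $\hat\gb$ destroys the sparsity pattern. Routing the computation through $\hb$ and the chain rule sidesteps this by expressing the ground-truth pixel gradients as the inferred mechanism acting on $\frac{\partial \hat\zb_j}{\partial \zb_i}$; once in this form, the rank computation of \cref{lem:compositional_rank} closes the argument. The remaining steps are bookkeeping and a direct appeal to \cref{cor:rev_prop_2}.
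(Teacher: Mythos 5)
Your proposal is correct and follows essentially the same route as the paper's proof: both pass through $\hb=\hat\gb\circ\fb$ and the chain-rule identity $\frac{\partial \fb_{\hat{I_j}}}{\partial \zb_i}=\frac{\partial \hat\fb_{\hat{I_j}}}{\partial \hat\zb_j}\frac{\partial \hat\zb_j}{\partial \zb_i}$ (collapsed via compositionality of $\hat\fb$), invoke \cref{cor:rev_prop_2} to limit the intersecting index sets to one slot, and use the full column rank of the slot-$j$ block from \cref{lem:compositional_rank} to force $\frac{\partial \hat\zb_j}{\partial \zb_i}(\zb)={\bf 0}$. The only difference is presentational: you phrase it as a direct contrapositive implication plus counting, whereas the paper runs the identical computation inside a proof by contradiction over a set $P$ with $|P|>1$.
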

\begin{proof}
    Our goal is to show that at most one $\hat \zb_j$ is a function of a given $\zb_i$. More precisely, let $C:=\{\, P \subseteq [K] : |P| > 1 \,\}$. We aim to show that:
        \begin{equation}\label{eqn:prop_3_base}
            \forall \zb\in \Z, i \in [K], \nexists P \in C: \qquad  j \in P \implies  \frac{\partial \hat \zb_j}{\partial \zb_i}(\zb) \neq {\bf 0}.
        \end{equation}
    Suppose for a contradiction to \cref{eqn:prop_3_base} that:
    \begin{equation}\label{eqn:prop_3_cont}
        \exists \zb^* \in \Z, i \in [K], P \in C : \qquad  j \in P \implies  \frac{\partial \hat \zb_j}{\partial \zb_i}(\zb^*) \neq {\bf 0}.
    \end{equation}
    Let $\zb^*$ denote the value for which \cref{eqn:prop_3_cont} holds and without loss of generality let $i=1$.

    We first introduce the function
        \begin{align*}
            \hb := \hat\gb \circ \fb: \Z\rightarrow \Z \textrm{ s.t. } \hat \zb := \hat\gb (\fb(\zb)) = \hb(\zb).
        \end{align*}
    
    Note that $\fb = \hat\fb \circ \hat\gb \circ \fb = \hat\fb \circ \hb$. Thus, $\forall S \subseteq [N], \ \ \fb_{S} = \hat\fb_{S} \circ \hb$.
        Therefore,
        \begin{equation}\label{eqn:prop_3_jac_eq}
            \forall \zb \in \Z, j \in [K] : \frac{\partial \fb_{{\hat I}_j}}{\partial \zb_1}(\zb) 
            = \frac{\partial 
            \hat \fb_{{\hat I}_j}}{\partial \hat \zb}
            (\hat{\zb})
            \frac{\partial \hat \zb}{\partial \zb_1}(\zb)
        \end{equation}
    Due to the compositionality of $\hat \fb$, $\frac{\partial 
            \hat \fb_{{\hat I}_j}}{\partial \hat \zb_k} 
            (\hat{\zb}) = {\bf 0}, \forall k \neq j \in [K]$. This implies that these columns can be ignored when taking the product in \cref{eqn:prop_3_jac_eq}, s.t.
         \begin{equation}\label{eqn:prop_3_jac_eq_comp}
            \frac{\partial 
            \hat \fb_{{\hat I}_j}}{\partial \hat \zb}
            (\hat{\zb})
            \frac{\partial \hat \zb}{\partial \zb_1}(\zb) = \frac{\partial 
                    \hat \fb_{{\hat I}_j}}{\partial \hat \zb_j} 
                    (\hat{\zb})
                    \frac{\partial \hat \zb_j}{\partial \zb_1}(\zb).
                \end{equation}
        Now by \cref{cor:rev_prop_2}, there exists exactly one $j\in P \subseteq [K]$ s.t. $\hat{I_j}(\hat\zb^*) \cap {I_1}(\zb^*) \neq \emptyset$. By the definition of ${I_i}(\zb)$ in \cref{eqn:compositional_index_sets}, this implies that there exists exactly one $j\in P$ s.t. $\frac{\partial \fb_{{\hat I}_j}}{\partial \zb_1}(\zb^*) \neq {\bf 0}$. $|P| > 1$, thus there exists a $j\in P$ s.t.
        \begin{equation}\label{eqn:prop_3_zero_jac}
        \frac{\partial \fb_{{\hat I}_j}}{\partial \zb_1}(\zb^*) = \frac{\partial 
            \hat \fb_{{\hat I}_j}}{\partial \hat \zb_j}(\hat{\zb}^*)\frac{\partial \hat \zb_j}{\partial \zb_1}(\zb^*) = {\bf 0}
        \end{equation}
        where we leveraged \cref{{eqn:prop_3_jac_eq}}, \cref{eqn:prop_3_jac_eq_comp} to get the first equality above. Now, we know by \cref{lem:compositional_rank}, that $\Jb\hat\fb_{{\hat I}_j}(\hat \zb^*)$ is full column-rank. By compositionality of $\hat\fb$, we also know that $\rank(\Jb\hat\fb_{{\hat I}_j}(\hat \zb^*)) = \rank(\frac{\partial \hat\fb_{{\hat I}_j}}{\partial \hat \zb_j}(\hat \zb^*))$ as these are the only non-zero columns in $\Jb\hat\fb_{{\hat I}_j}(\hat \zb^*)$. Thus, $\frac{\partial \hat\fb_{{\hat I}_j}}{\partial \hat \zb_j}(\hat \zb^*)$ is also full column-rank. Now, \cref{eqn:prop_3_zero_jac} implies that all columns of $\frac{\partial \hat \zb_j}{\partial \zb_1}(\zb^*)$ must be in $\mathrm{null}(\frac{\partial 
            \hat \fb_{{\hat I}_j}}{\partial \hat \zb_j}(\hat \zb^*))$. Because, $\frac{\partial 
            \hat \fb_{{\hat I}_j}}{\partial \hat \zb_j}(\hat \zb^*)$ is full-column rank, $\mathrm{null}(\frac{\partial 
            \hat \fb_{{\hat I}_j}}{\partial \hat \zb_j}(\hat \zb^*))={\bf 0}$. However, by \cref{eqn:prop_3_cont} at least one column of $\frac{\partial \hat \zb_j}{\partial \zb_1}(\zb^*)$ is non-zero. Thus, we obtain a contradiction and conclude that \cref{eqn:prop_3_base} must hold.
\end{proof}
\vspace{\topsep}

Building on top of the previous propositions, we now prove our main identifiability result:
\identifiability*
\begin{proof}\vspace{-1.8\topsep}
    According to \cref{prop:block_identifiability_at_least_one_slot} every inferred latent slot $\hat\zb_j$ depends on \textit{at least} one ground-truth latent slot $\zb_i$. At the same time, \cref{prop:at_most_one_object} states that every inferred latent slot depends on \textit{at most} one ground-truth slot. Hence, every inferred latent slot depends on \textit{exactly} one ground-truth slot.
    
    This implies that the Jacobian $\Jb\hb(\zb)$ of $\hb= \hat\gb \circ \fb: \Z\rightarrow \Z$ must be block diagonal up to permutation everywhere:
    \begin{align}
        \forall \zb \in \Z: \qquad \Jb\hb(\zb)=\Pb(\zb)\Bb(\zb)
    \end{align}
    where $\Pb(\zb)$ is a permutation matrix and $\Bb(\zb)$ a block-diagonal matrix.

Next, note that
\begin{equation}
\det(\Jb\hb(\zb))=\det(\Pb(\zb))\det(\Bb(\zb))=\det(\Bb(\zb))\neq 0 \qquad 
\end{equation}
since $\hb$ is diffeomorphic.
Hence, $\Bb(\zb)$ is invertible with continuous inverse. We conclude that \begin{align}
    \Pb(\zb)=\Jb\hb(\zb)\Bb^{-1}(\zb)
\end{align}
is continuous. At the same time, $\Pb(\zb)$ can only attain a finite set of values since it is a permutation. Hence, $\Pb(\zb)$ must be constant in $\zb$, that is, the same global permutation is used everywhere.\footnote{Suppose for a contradiction that $\Pb(\zb)$ attains distinct values at some $\zb^A\neq\zb^B$ in $\Zcal$. Since $\Zcal$ is convex, the line connecting $\zb^A$ and $\zb^B$ is also in $\Zcal$ and $\Pb$ must change value somewhere along this line, leading to a discontinuity and thus a contradiction.}

    Thus, for any $j\in K$, there exists a \textit{unique} $i\in K$ such that the function $\hb_{j}= \hat\gb_{j}\circ \fb : \Z\rightarrow \Z_j$ is, in fact, constant in all slots except  $\Zcal_i$, i.e., it can be written as a mapping $\hb_{j} : \Z_i\rightarrow \Z_j$.
    
    Finally, all such $\hb_{j}$ are diffeomorphic, since $\hb$ is a diffeomorphism.
    
    This concludes the proof that assumptions \emph{(i)} and \emph{(ii)} imply $\hat\gb$ slot-identifies $\zb$.
\end{proof}

We now show that the compositional contrast $\Ccomp$ introduced in \cref{eq:compositional_contrast} indicates whether a map is compositional:
\vspace{\topsep}
\begin{lemma} \label{lem:compositional_contrast}
    Let $\fb: \Z\rightarrow \X$ be a differentiable function. $\fb$ is compositional in the sense of \cref{def:compositional} if and only if  for all~$\zb \in \Z$: $$\Ccomp(\fb, \zb) = 0\,.$$
\end{lemma}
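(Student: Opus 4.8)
The plan is to prove both directions of the biconditional by exploiting the structural fact that $\Ccomp(\fb,\zb)$ is a sum of \emph{nonnegative} terms, so that its vanishing is equivalent to the vanishing of each individual summand. The only real content is then to translate the pointwise condition ``at most one slot affects pixel $n$'' back and forth between the index-set language of \cref{def:compositional} and the gradient-norm products appearing in \cref{eq:compositional_contrast}, while threading the universal quantifier over $\zb$ consistently through both directions.

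For the forward direction, I would assume $\fb$ is compositional and fix an arbitrary $\zb \in \Z$. By \cref{def:compositional} the index sets $\{I_k(\zb)\}_{k\in[K]}$ are pairwise disjoint, so every pixel $n \in [N]$ lies in at most one of them. Consequently, for any fixed $n$ and any pair $k \neq j$, at least one of $\frac{\partial f_n}{\partial \zb_k}(\zb)$ or $\frac{\partial f_n}{\partial \zb_j}(\zb)$ must equal $\mathbf{0}$ --- otherwise, by the definition of the index sets in \cref{eqn:compositional_index_sets}, the pixel $n$ would belong simultaneously to $I_k(\zb)$ and $I_j(\zb)$, contradicting disjointness. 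Hence every product of norms in \cref{eq:compositional_contrast} vanishes, and summing over all $n,k,j$ gives $\Ccomp(\fb,\zb) = 0$. Since $\zb$ was arbitrary, this holds for all $\zb \in \Z$.

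For the backward direction, I would assume $\Ccomp(\fb,\zb) = 0$ for all $\zb \in \Z$ and again fix an arbitrary $\zb$. Because each summand in \cref{eq:compositional_contrast} is a product of two nonnegative real numbers, the entire sum is nonnegative and can equal zero only if every summand is individually zero. Thus for each pixel $n$ and each pair $k \neq j$ we have $\left\|\frac{\partial f_n}{\partial \zb_k}(\zb)\right\|\left\|\frac{\partial f_n}{\partial \zb_j}(\zb)\right\| = 0$, which forces at least one of the two gradients to be $\mathbf{0}$. By \cref{eqn:compositional_index_sets} this means that no pixel $n$ can simultaneously satisfy $n \in I_k(\zb)$ and $n \in I_j(\zb)$ for distinct $k,j$; equivalently, the sets $\{I_k(\zb)\}_{k\in[K]}$ are pairwise disjoint. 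As this holds for every $\zb \in \Z$, \cref{def:compositional} shows that $\fb$ is compositional.

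I do not anticipate any serious obstacle here, since the argument reduces to the elementary observation that a finite sum of nonnegative terms vanishes if and only if each term vanishes. The only point requiring genuine care is the bookkeeping between the two equivalent phrasings of the pointwise condition --- ``the product of gradient norms is zero'' versus ``pixel $n$ belongs to at most one $I_k(\zb)$'' --- together with ensuring that the quantifier ``for all $\zb \in \Z$'' is carried through both implications so that the conclusion matches precisely the formulation of compositionality in \cref{def:compositional}.
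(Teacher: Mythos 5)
Your proof is correct and follows essentially the same route as the paper's: both directions hinge on the observation that a finite sum of nonnegative products vanishes iff every summand does, combined with translating between the gradient-norm products in \cref{eq:compositional_contrast} and the disjointness of the index sets from \cref{eqn:compositional_index_sets}. The only difference is cosmetic --- you label ``compositional $\Rightarrow$ zero contrast'' as the forward direction while the paper treats the converse first --- so nothing of substance distinguishes the two arguments.
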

\begin{proof}\vspace{-1.8\topsep}

    $(\Rightarrow)$ We begin by analyzing $\Ccomp(\fb, \zb)$:
    \begin{equation}\label{eq:c_comp_repeat}
        \sum_{n=1}^{N} \sum_{k=1}^{K} \sum_{j=k+1}^{K} \left\|\frac{\partial f_{n}}{\partial \zb_{k}}(\zb)\right\|_{2} \left\|\frac{\partial f_{n}}{\partial \zb_{j}}(\zb)\right\|_{2}
    \end{equation}
    Since all summands are non-negative, the sum can only equal zero if every summand is zero $\forall \zb \in \Z$. Since $j\neq k$ in the summand, this means:
        \begin{equation}
            \forall \zb \in \Z, \forall n \in [N], k\neq j \in [K]: \left\|\frac{\partial f_{n}}{\partial \zb_{k}}(\zb)\right\|_{2} \left\|\frac{\partial f_{n}}{\partial \zb_{j}}(\zb)\right\|_{2} = 0 \\
        \end{equation}
    This relation can only be satisfied if one (or both) of the partial derivatives in the summand have a norm of zero, i.e. if they are zero. More precisely,
        \begin{equation}\label{eq:lemma_compositional_contrast_loss_simplified}
            \forall \zb \in \Z, \forall n \in [N],  k\neq j \in [K]: \frac{\partial f_{n}}{\partial \zb_{k}}(\zb) = {\bf 0} \lor \frac{\partial f_{n}}{\partial \zb_{j}}(\zb) = {\bf 0}.
        \end{equation}
  
    According to~\cref{def:compositional} a map $\fb$ is compositional if 
        \begin{equation}\label{eq:repeat_comp}
            \forall \zb\in \Z: \qquad   k\neq j \implies   I_k(\zb) \cap I_j(\zb) = \emptyset.
        \end{equation}
    By the definition of $I_i(\zb)$ in \cref{eqn:compositional_index_sets}, we can restate \cref{eq:repeat_comp} as:
        \begin{equation}\label{eq:restate_comp}
            \forall \zb\in \Z, k\neq j, \nexists n \in [N] : \frac{\partial f_{n}}{\partial \zb_{k}}(\zb) \neq {\bf 0} \land \frac{\partial f_{n}}{\partial \zb_{j}}(\zb) \neq {\bf 0}
        \end{equation}
    which implies:
        \begin{equation}\label{eq:imp_comp}
            \forall \zb\in \Z, n \in [N], k\neq j : \frac{\partial f_{n}}{\partial \zb_{k}}(\zb) = {\bf 0} \lor \frac{\partial f_{n}}{\partial \zb_{j}}(\zb) = {\bf 0}
        \end{equation}
    which is equivalent to \cref{eq:lemma_compositional_contrast_loss_simplified}. Hence, $\forall \zb \in \Z: \Ccomp(\fb, \zb) = 0$ implies that $\fb$ is compositional.

    $(\Leftarrow)$ We now prove the reverse direction i.e. that if $\fb$ is compositional, then $\forall \zb \in \Z: \Ccomp(\fb, \zb) = 0$. Note that the form of compositionality given in \cref{eq:restate_comp} implies that $\forall \zb \in \Z$, at least one term in the summand of $\Ccomp(\fb, \zb)$ in \cref{eq:imp_comp} will be zero. Thus, each summand is equal to zero. This then implies that $\forall \zb \in \Z: \Ccomp(\fb, \zb) = 0$, completing the proof.
\end{proof}

Finally, by leveraging \cref{lem:compositional_contrast}, we can obtain \cref{thm:slot_identifiability} in a less abstract form.
\vspace{\topsep}
\contrast*
\begin{proof}\vspace{-1.8\topsep}
    As both summands of the functional are non-negative, solving the functional equation means solving for each of the summands to be equal to zero. Thus, we can analyze both of them separately. Solving the first sub-functional equation, i.e.,
    \begin{align*}
        \mathbb{E}_{\xx \sim p_\xx}\left[\left\|\hat\fb(\hat \gb(\xb))-\xb\right\|^{2}_{2}\right] = 0,
    \end{align*}
    implies that $\hat\fb$ is an inverse of $\hat\gb$ for every $\xb \sim p_\xb$. Because $p_\zb$ is assumed to have full support over $\Z$, and $p_\xb$ is defined by applying a diffeomorphism $\fb: \Z\rightarrow \X$ on $p_\zb$, this implies that $p_\xb$ has full support over $\X$. This means that $\hat\fb$ is an inverse of $\hat\gb$ over the entire space $\X$ i.e. $\hat\fb = \hat\gb^{-1}$. Since per assumption $\hat\gb$ and $\hat\fb$ are differentiable it follows that $\hat\gb$ is a diffeomorphism. 
    
    Moreover, per~\cref{lem:compositional_contrast}, solving the second sub-functional equation for $\lambda > 0$, i.e.,
    \begin{align*}
        \mathbb{E}_{\xx \sim p_\xx}\left[\lambda\Ccomp(\hat \fb, \hat\gb(\xb))\right] = 0,
    \end{align*}
    means that ${\hat\fb}$ is compositional as $p_\xb$ has full support over $\X$ and $\hat\g$ is a diffeomorphism between $\X$ and $\Z$. From \cref{thm:slot_identifiability} it now follows that $\hat\gb$ slot-identifies $\zb$, concluding the proof.
\end{proof}

\section{Experimental Details}
\label{sec:experimental_details}
\subsection{Synthetic Data \cref{subsec:exp1}}\label{subsec:exp1_details}
\paragraph{Enforcing Irreducibility}
We choose  slot-output dimension, which we will denote $\mathrm{dim}(\xb_{s})$, to be greater than slot-dimension $M$ as this is required for irreducibility~(\cref{def:irreducible_mechanism}). To see this, assume the number of rows in each mechanism (\cref{def:mechanism}), equal in our case to $\mathrm{dim}(\xb_{s})$, were equal to $M$. Because mechanisms have $\rank = M$~(\cref{lem:compositional_rank}) and we have $M$ rows, this implies that no row is in the span of any others. Hence, the mechanism would be reducible. Beyond enforcing that the slot-output dimension, equal to $20$ in this case, is greater than $M=3$, we do not do anything further to ensure that our ground-truth generator is irreducible. This is because it is extremely unlikely that the generator, as we have constructed it, could be reducible. Specifically, if the generator were reducible, then as $\mathrm{dim}(\xb_{s})$ becomes larger than $M$, each new row in the Jacobian would need to lie in the span of some subset of the previous rows. As $\mathrm{dim}(\xb_{s})$ continues to increase relative to $M$, however, this becomes increasingly unlikely since the rows in the weight matrices of our MLP generator are randomly sampled i.e. entries are sampled uniformly from $[-10, 10]$.
\paragraph{Inference Model Training and Evaluation}
For our inference model, we use a 3 layer MLP with 80 hidden units in each layer and LeakyReLU activation functions. We train on 75,000 samples and use 6,000 and 5,000 for validation and test sets, respectively. We train for 100 epochs with the Adam optimizer~\citep{kingma2014adam} on batches of 64 with an initial learning rate of $10^{-3}$, which we decay by factor of 10 after 50 epochs. We use the validation set to find the optimal permutation for the Hungarian matching and then evaluate the SIS on the test set after applying this permutation to the slots. We compute the SIS for models every 4 epochs during training, all of which are plotted in~\cref{fig:results_toy_image_experiments}. We trained all models using PyTorch~\citep{Paszke2019PyTorchAI}.
\subsection{Existing Object-Centric Models \cref{subsec:exp2}}\label{subsec:exp2_details}
\paragraph{Data Generation}
We generate image data using the Spriteworld renderer \citep{spriteworld19}. Images consist of $2$ to $4$ objects, each described by $4$ continuous (size, color, x/y position) and $1$ discrete (shape) independent latent factors. We sample all factors uniformly where size is sampled from $[.1, .15]$ and x/y position both from $[.1, .8]$. We represent color using HSV and sample hue from $[0, 1]$ while fixing saturation and value to 3 and 1, respectively. The dataset consists of 100,000 images, 90,000 of which are used for training and 10,000 for evaluation.
\paragraph{Inference Model Training and Evaluation}
We use the same Slot Attention model proposed by~\citet{locatello2020object}, with the changes being that we use $16$ convolutional filters in the decoder opposed to $32$ and do not use a learning rate warm-up. For MONet, we follow the setup used by~\citet{dittadi2021generalization} on Multi-dSprites~\citep{multiobjectdatasets19}. For our additive autoencoder, we use the convolutional encoder/decoder architecture proposed by~\citet{burgess2018understanding}. The model decodes each slot separately to get slot-wise reconstructions and mask, applies the normalized mask to each slot-wise reconstruction, and then adds the results together to get the final reconstructed image. For all models, we use $4$ slots with a slot-dimension of $16$. We train all models for $500,000$ iterations ($356$ epochs) on batches of $64$ with between $5$ to $12$ random seeds for each model. We train using the Adam optimizer~\citep{kingma2014adam} with an initial learning rate of $10^{-4}$, which we decay throughout training for all models using the same decay scheduler as~\citet{locatello2020object}. We trained all models using PyTorch~\citep{Paszke2019PyTorchAI}.

\subsection{Compositional Contrast Normalized Variants}\label{subsec:c_comp_ext}
When computing $\Ccomp$ in \cref{subsec:exp1} and \cref{subsec:exp2}, we use a few different normalized variants of the contrast to overcome potential issues with the definition given in~\cref{def:compositional_contrast}. Firstly, as the number of latent slots $K$ increases, the contrast in~\cref{def:compositional_contrast} will scale by a factor $K^{2}-K$. Thus, when comparing models across different numbers of slots in \cref{subsec:exp1}, we divide the contrast by this factor to ensure that comparisons remain meaningful across different values of $K$. Another issue with the contrast in~\cref{def:compositional_contrast}, is that it is not scale invariant. Specifically, naively minimizing the norm of the gradients for each pixel across slots will also minimize the contrast, despite all slots having similar gradient norms for a given pixel. This scale invariance did not cause issues when optimizing $\Ccomp$ directly in \cref{subsec:exp1}. However, when evaluating the $\Ccomp$ of object-centric models in \cref{subsec:exp2}, we account for this invariance. Specifically, we divide the gradient norms for each pixel with respect to each slot by the mean gradient norm for this pixel across slots. This gradient normalization creates an additional problem, however: Pixels with a relatively small gradient norm, such as black background pixels, will be weighted equally to pixels with a larger gradient norm such as pixels corresponding to an object. To account for this, we weight each pixel's contribution to the contrast by the pixel's mean gradient across slots.

\subsection{Slot Identifiability Score}\label{subsec:sis_details}
We are interested in a metric measuring how much information about the ground-truth latent slots is contained in the inferred latent slots without mixing information about different ground-truth slots into the same inferred slot. Let $S_1, S_2 \in [0, 1]$ denote scores that quantify how much information about each ground-truth slot can be extracted from the most and second-most predictive inferred slot, respectively.
The aforementioned metric can be computed by just subtracting the two scores, i.e.
\begin{align}
    S = S_1 - S_2.
\end{align}
Following previous work, we use the $R^{2}$ coefficient of determination as a score for continuous factors of variation (which we restrict to be strictly non-negative) and the accuracy for categorical factors \citep{dittadi2021generalization}. We compute one $S$ value for each type and take the weighted mean which we then average across all slots to get the final slot identifiability score (SIS).

\paragraph{Computing SIS on Synthetic Data \cref{subsec:exp1}}
To compute the scores $S_1$ and $S_2$ defined in our experiments in \cref{subsec:exp1}, we must fit two inference models between ground-truth and inferred slots: one between the best-matching slots and one between the second-best-matching slots. In~\cref{subsec:exp1}, we fit these models by first fitting a kernel ridge regression model between every pair of inferred and ground-truth slots and computing the $R^{2}$ scores for the predictions given by each model. We then use the Hungarian algorithm~\citep{kuhn1955hungarian} to match each ground-truth slot to its most predictive inferred slot based on these $R^{2}$ scores, which gives us $S_1$. To get $S_2$, we take the highest $R^{2}$ score for each inferred slot with respect to the ground-truth slots that it was not already matched with. For our experiments in~\cref{fig:toy_dependent_latents} with dependent latent slots, $S_2$ will inevitably be non-zero even if a model is perfectly identifiable. Thus, for these experiments, we only consider $S_1$ and refer to this metric as the Slot MCC (Mean Correlation Coefficient).
\paragraph{Computing SIS on Image Data \cref{subsec:exp2}}
When training models to compute $S_1$ and $S_2$ in our experiments on image data in~\cref{subsec:exp2}, one issue that arises is that the permutation between inferred latent slots and ground-truth slots is not necessarily a global permutation but can also be a local permutation. This is due to the ground-truth generator function being permutation invariant. To resolve this, we take a similar approach to work by~\citet{dittadi2021generalization} and perform an online matching during training of inferred latent slots to ground-truth slots using the training loss. Specifically, we compute the loss for every pairing of the ground-truth and inferred slots and use the Hungarian algorithm to pick the permutation that yields the lowest aggregate loss. As every slot can contain both continuous and categorical variables, we compute the mean squared error for continuous factors and cross-entropy for categorical variables and sum them up to obtain the training loss. In our experiments, we notice that the cross-entropy tends to yield unstable matching results. Therefore, we use the minimum probability margin \footnote{i.e., $\max_i p_i - p_y$, where $p$ denotes the predicted probability for different values of the categorical distribution and $y$ the ground-truth value} to compute the categorical loss to solve the matching problem.
Before fitting the readout models, we standardized both the ground-truth and inferred latents. We parameterized the readout models as $5$-layer MLPs with LeakyReLU nonlinearity and a hidden dimensionality of $256$, and trained them for up to $100$ epochs using the Lion optimizer with a learning rate of $10^{-4}$. To prevent the network from locking in too early on a suboptimal solution, we add a small amount of noise ($10$\,\% of the maximum matching loss value) to the losses before determining the optimal matching. Finally, we suggest performing cross-validation and early stopping to prevent overfitting.

For training the model to compute $S_2$, we proceed as for $S_1$ but ensure that the model is not using the same permutation used for computing $S_1$, i.e., it is trained on the second-best matching between ground-truth and inferred slots. Lastly, when computing $S_2$, we aim to avoid scenarios in which the model finds a spurious permutation yielding a non-zero $S_2$ despite the model being identifiable. To account for this, we compute $S_2$ on the ground-truth latent slots, denoted $S_2^{\textrm{gt}}$, using the same procedure for computing $S_2$, and use this score to adjust our previous scores. Specifically, by adjusting the value range accordingly, we obtain a score of
\begin{align}
    S = \frac{S_1 - S_2^{\textrm{gt}}}{1 - S_2^{\textrm{gt}}} - \frac{S_2 - S_2^{\textrm{gt}}}{1 - S_2^{\textrm{gt}}},
\end{align}
To ensure that the subtracting term is not increasing the final score, we restrict it to be positive, yielding the final score:
\begin{align} \label{eq:appx_sis}
    S = \frac{S_1 - S_2^{\textrm{gt}}}{1 - S_2^{\textrm{gt}}} - \max\left( \frac{S_2 - S_2^{\textrm{gt}}}{1 - S_2^{\textrm{gt}}}, 0\right).
\end{align}
We may additionally be interested in considering the two terms on the RHS of~\cref{eq:appx_sis} separately. Thus, we define them below as:
\begin{align}
    \hat S_1 = \frac{S_1 - S_2^{\textrm{gt}}}{1 - S_2^{\textrm{gt}}}, \quad
    \hat S_2 = \frac{S_2 - S_2^{\textrm{gt}}}{1 - S_2^{\textrm{gt}}}, \quad
    S = \hat S_1 - \max(\hat S_2, 0).
\end{align}

\clearpage
\section{Additional Figures and Experiments}\label{sec:addit_figures}
\begin{figure}[tbh]
    \centering
    \includegraphics[width=0.5\linewidth]{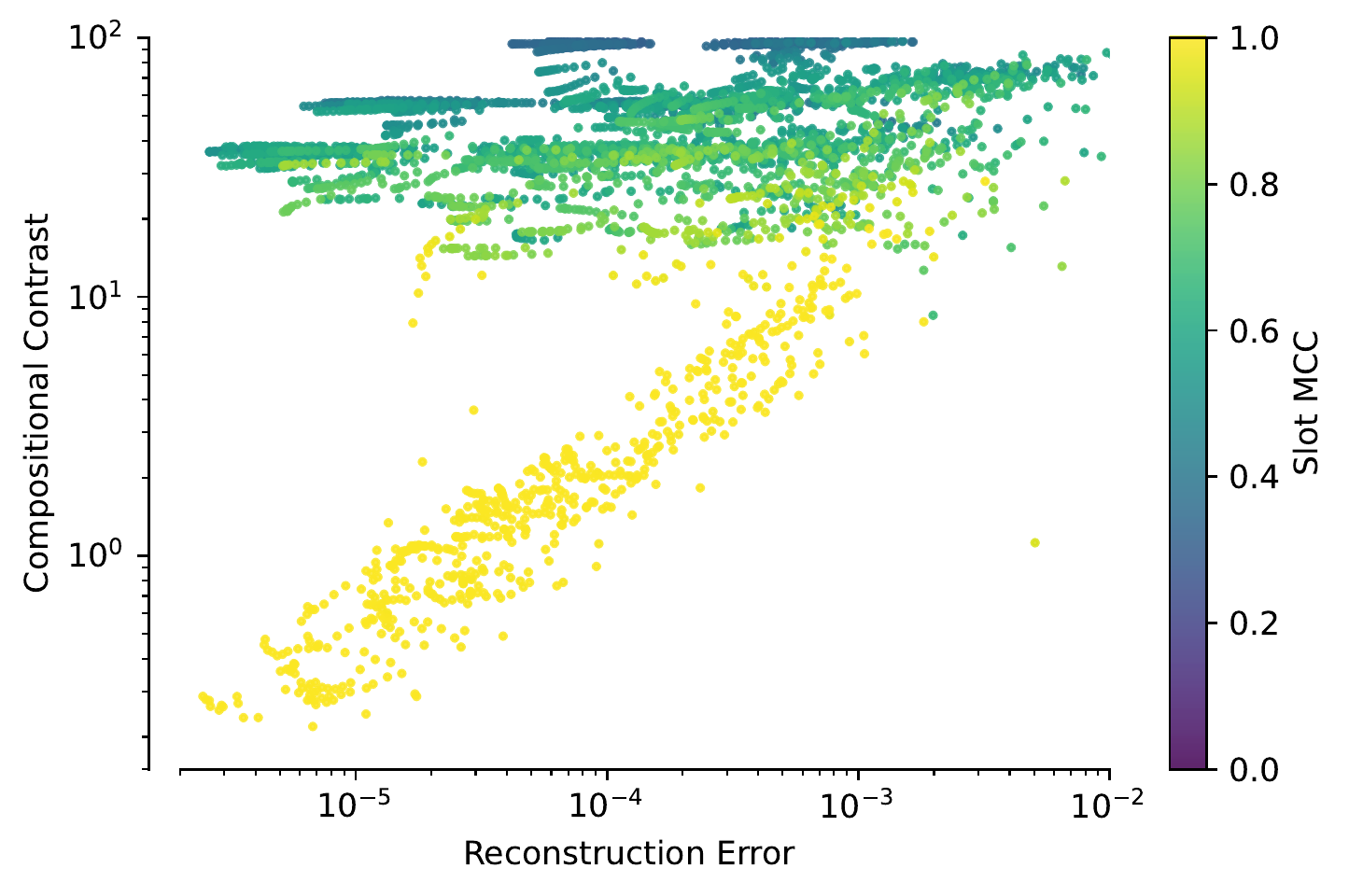}
    \caption{\textbf{Experimental validation of \cref{thm:compositional_contrast_identifiable} for statistically dependent slots.} We trained models on synthetic data generated according to \cref{sec:model} with 2, 3, 5 dependent latent slots (see~\cref{subsec:exp1}). The color coding indicates the level of identifiability achieved by the model, measured by the Slot Mean Correlation Coefficient (MCC), where higher values correspond to more identifiable models. As predicted by our theory, if a model sufficiently minimizes both reconstruction error and compositional contrast, then it identifies the ground-truth latent slots.}
    \label{fig:toy_dependent_latents}
\end{figure}

\begin{figure}[tbh]
    \centering
    \includegraphics[width=0.45\linewidth]{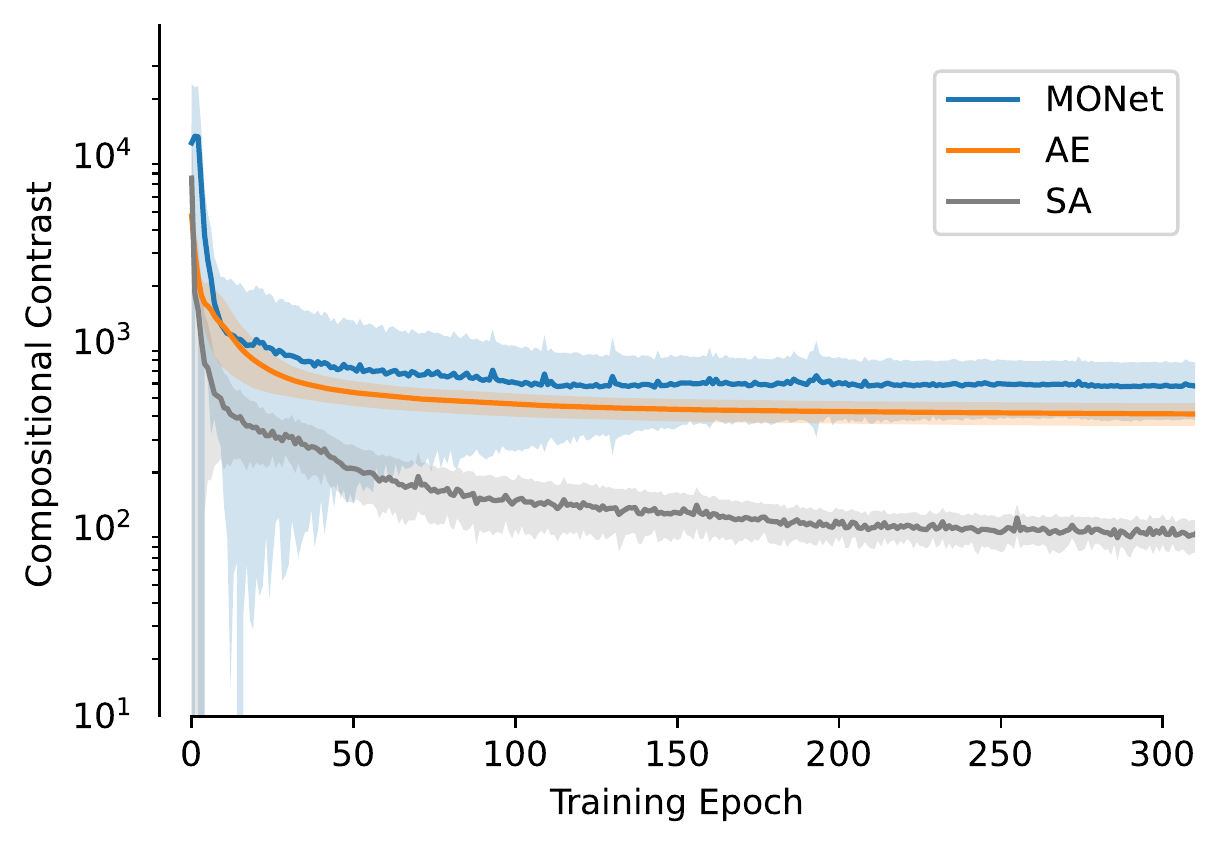}
    \caption{\textbf{Compositional Contrast ($\Ccomp$) throughout training.} Here, we plot the compositional contrast ($\Ccomp$) over the course of training for MONet, Slot Attention (SA) as well as an additive auto-encoder (AE), on image data. We can see that all models appear to be minimizing $\Ccomp$ to some extent despite it not being explicitly optimized for in any of these models.}
    \label{fig:image_models_training_curves}
\end{figure}

\begin{figure*}[tbh]
    \centering
    \includegraphics[width=\linewidth]{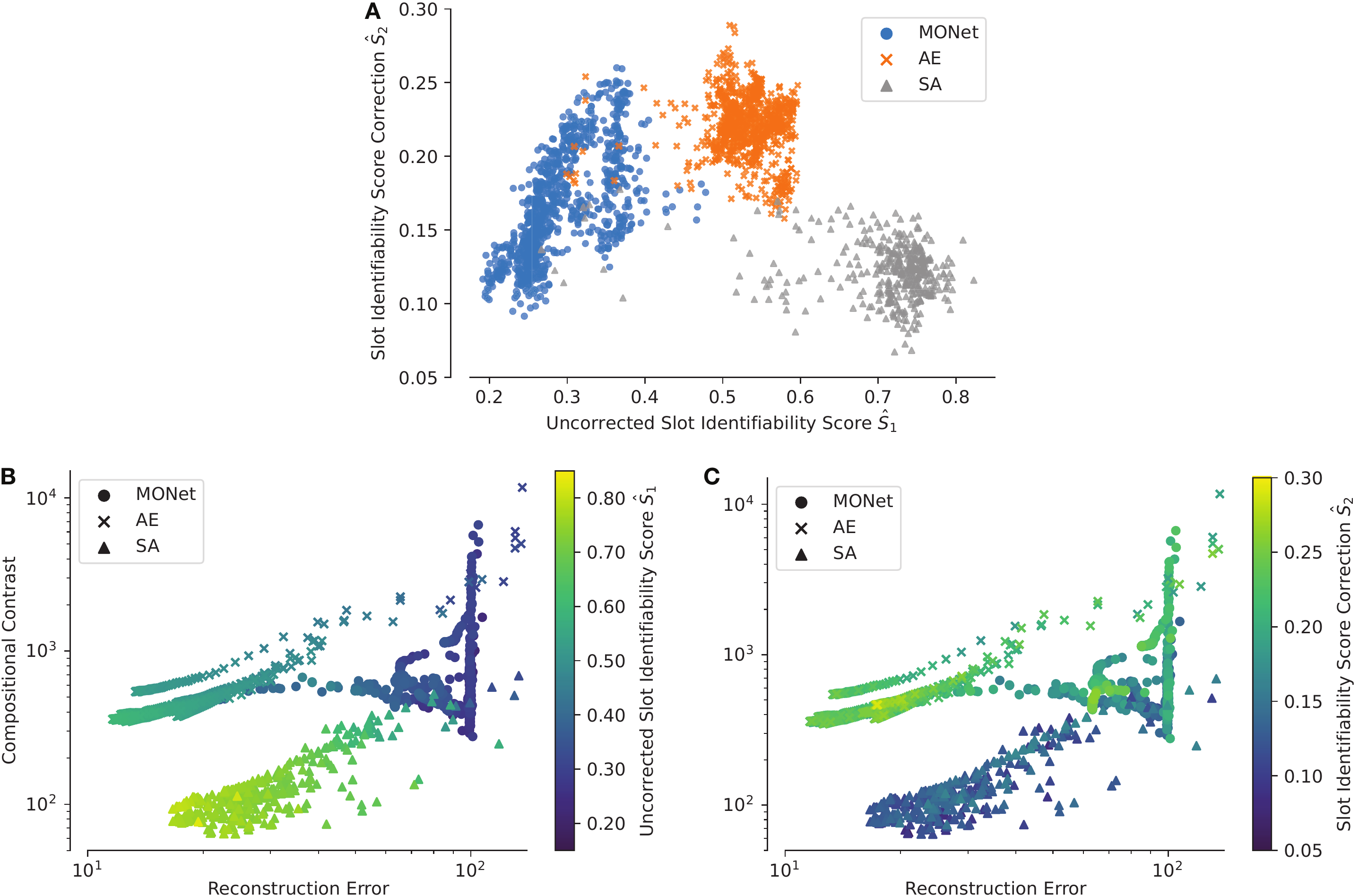}
    \vspace{-2em}
    \caption{\textbf{Analysis of Information Leakage Between Slots from Models Trained in \cref{subsec:exp2}.} 
    \textbf{(A) Uncorrected Slot Identifiability Score ($\hat S_1$) vs. Correction ($\hat S_2$).}
    We train 3 existing object-centric architectures---MONet, Slot Attention (SA), and an additive auto-encoder (AE)---on image data and investigate whether inferred latent slots encode information from multiple objects when using an inferred latent dimension greater than the ground-truth. To test this, we look at the $R^2$ score for a model fit between each inferred slot and the second most predictive ground-truth slot for this slot. We refer to this score as the \emph{slot identifiability score correction}, defined as $\hat S_2$ in \cref{subsec:sis_details}. We plot this score against the uncorrected slot identifiability score i.e. the most predictive ground-truth slot, defined as $\hat S_1$ in \cref{subsec:sis_details}. We can see that for all models, $\hat S_2$ is non-zero, even as $\hat S_1$ increases, suggesting that models are leveraging their additional latent capacity to encode information about multiple objects in the same latent slot.
    \textbf{(B) and (C) Influence of Reconstruction Error and Compositional Contrast on $\hat S_1$ and $\hat S_2$.}
    Here, we further visualize the slot identifiability score correction ($\hat S_1$) and the uncorrected score ($\hat S_2$) as a function of the reconstruction error and the compositional contrast in panels B and C, respectively. 
    We can see in B that, similar to the SIS in~\cref{fig:results_toy_image_experiments}, $\hat S_1$ tends to increase as reconstruction loss and compositional contrast decrease. We can additionally see in C that, while $\hat S_2$ decreases to some extent with $\Ccomp$, there is generally less of a correlation between $\hat S_2$ and these metrics. This suggests that the latent capacity must also be restricted to minimize $\hat S_2$.}
\label{fig:joint_results_spriteworld_on_off_and_on_vs_off_and_on_and_off}
\end{figure*}

\begin{figure*}[tbh]
    \centering
    \includegraphics[width=0.8\linewidth]{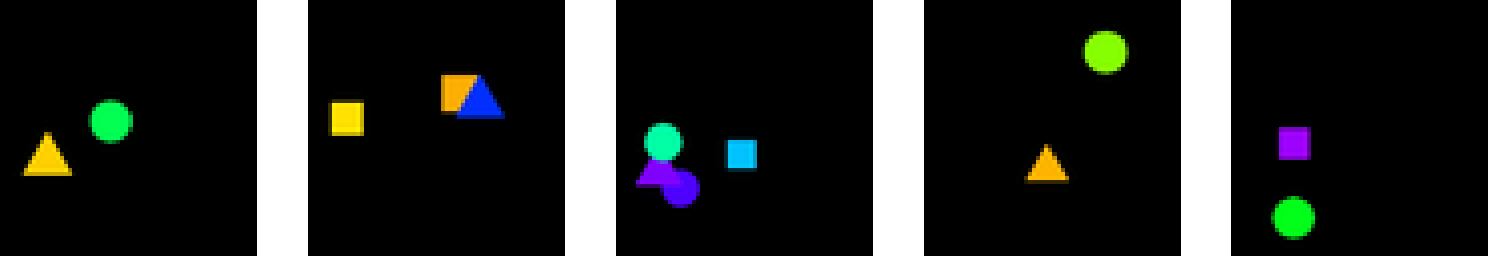}
    \caption{Samples from our multi-sprites dataset used in ~\cref{subsec:exp2}. Objects are described by five latent factors: shape, color, size, and x/y position. Occlusions are present in the dataset, as shown in the samples above (see the second and third images from the left).}
    \label{fig:sprites_sample}
\end{figure*}
\end{document}